\newtheorem{theorem}{Theorem}
\newtheorem{assumption}{Assumption}
\newtheorem{proposition}{Proposition}
\newtheorem{definition}{Definition}
\newtheorem{remark}{Remark}
\begin{document}

\title{An Accelerated Alternating Partial Bregman Algorithm for ReLU-based Matrix Decomposition}

\author{Qingsong Wang\thanks{School of Mathematics and Computational Science, Xiangtan University, Xiangtan, 411105, China. Email: nothing2wang@hotmail.com}
	, \hskip 0.2cm Yunfei Qu\thanks{LMIB, School of Mathematical Sciences, Beihang University, Beijing 100191, China. Email: yunfei$\_$math@hotmail.com}
	, \hskip 0.2cm Chunfeng Cui\thanks{LMIB, School of Mathematical Sciences, Beihang University, Beijing 100191, China. Email: chunfengcui@buaa.edu.cn}
	, \hskip 0.2cm Deren Han\thanks{Corresponding author. LMIB, School of Mathematical Sciences, Beihang University, Beijing 100191, China. Email: handr@buaa.edu.cn}
}

\maketitle

\begin{abstract} 
Despite the remarkable success of low-rank estimation in data mining, its effectiveness diminishes when applied to data that inherently lacks low-rank structure. To address this limitation, in this paper, we focus on non-negative sparse matrices and aim to investigate the intrinsic low-rank characteristics of the rectified linear unit (ReLU) activation function.   
We first propose a novel nonlinear matrix decomposition framework incorporating a comprehensive regularization term designed to simultaneously promote useful structures in clustering and compression tasks, such as low-rankness, sparsity, and non-negativity in the resulting factors. This formulation presents significant computational challenges due to its multi-block structure, non-convexity, non-smoothness, and the absence of global gradient Lipschitz continuity. To address these challenges, we develop an accelerated alternating partial Bregman proximal gradient method (AAPB), whose distinctive feature lies in its capability to enable simultaneous updates of multiple variables.  Under mild and theoretically justified assumptions, we establish both sublinear and global convergence properties of the proposed algorithm. Through careful selection of kernel generating distances tailored to various regularization terms, we derive corresponding closed-form solutions while maintaining the  $L$-smooth adaptable property always holds for any $L\ge 1$. Numerical experiments, on graph regularized clustering and sparse NMF basis compression confirm the effectiveness of our model and algorithm. 

\end{abstract}

\begin{keywords}
Low-rank, ReLU activation function, alternating minimization, Bregman method, acceleration.

\end{keywords}

\maketitle
\section{Introduction}

Exploring the low-rank structure inherent in real-world data is a fundamental technique within the applied mathematics and machine learning community \cite{Bishop06}. 
Specifically, given a matrix $M\in\mathbb R^{m\times n}$, the low-rank approximation method assumes that $M$ 
can be effectively approximated by a low-rank matrix, i.e., 
\begin{eqnarray}\label{eq:M_lowrank}
	M\approx X, \ \ X \text{ is low-rank}.
\end{eqnarray} 
Prominent low-rank decomposition methodologies encompass two fundamental approaches: the truncated singular value decomposition (TSVD) \cite{EckartY36} and non-negative matrix factorization (NMF) \cite{LeeS99, Gillis20}. These established techniques have been widely adopted in various applications due to their theoretical foundations and practical effectiveness \cite{WrightM22}. 

However, in practical applications, the observed data often deviates from the ideal low-rank assumption, thereby limiting the applicability of models based on the assumption in \eqref{eq:M_lowrank}. This limitation becomes particularly pronounced when dealing with large-scale, non-negative, and sparse matrices \cite{Saul22, Saul23}. A notable example is the identity matrix, which, despite being highly sparse, inherently possesses full rank. Direct application of low-rank approximation to such data structures may consequently fail to capture and exploit their intrinsic structural properties. 
Given the fundamental importance and ubiquitous applications of sparse non-negative matrices in diverse scientific domains, investigating their inherent low-rank properties through activation functions has garnered significant research attention. These matrices play pivotal roles in numerous applications, including but not limited to: dimensionality reduction \cite{LiuLLLSL17, CheWC23}, hyperspectral unmixing \cite{XiongZLQ20, GuoMYCLG22}, advanced signal processing \cite{HumbertBOKV21, MarminGF23}, and community detection in complex networks \cite{WuJKH18, LiHW18}. This widespread applicability has motivated extensive research into understanding and exploiting their underlying low-rank structures through activation function-based approaches.

A deeper understanding of the intricate relationship between low-rankness and sparsity can be achieved by extending beyond conventional linear algebraic decomposition methods \cite{UdellHZB16, WrightM22}. In a significant contribution to this field, Saul \cite{Saul22} investigated the intrinsic low-rank characteristics of non-negative sparse matrices through an innovative approach. Specifically, the study focused on developing a nonlinear matrix decomposition (NMD) framework that, given a non-negative sparse matrix $M\in\mathbb{R}^{m\times n}$, seeks to identify a real-valued matrix $X$  of equal or lower rank than $M$ as follows,   
\begin{eqnarray}\label{equ:NMD_approx}
	M\approx \max(0, X), \ \ X \text{ is low-rank}, 
\end{eqnarray} 
where $\max(0,\cdot)$ is the rectified linear unit (ReLU) function, 
which is widely-used as the activation function for deep neural networks \cite{GoodfellowBC16}. This indicates that when $M$ is sparse, we are able to explore a large space where the zero entries may be any non-positive values and seek a low-rank matrix \cite{Saul23}. 
A particularly insightful example from \cite{Saul22} demonstrates the remarkable capability of this approach: for an $n \times n$ identity matrix $M$, there exists a matrix $X$ satisfying $M = \max(0, X)$ with $\text{rank}(X) \le 3$ regardless of the dimension $n$. This striking result, maintaining rank-3 decomposition for identity matrices of arbitrary size, has propelled significant research interest in NMD, as evidenced by recent studies \cite{Saul22, MazumdarR19, Wang0DZHZL23}.

Mathematically, the ReLU-based NMD (ReLU-NMD) model approximates the expression in \eqref{equ:NMD_approx} by minimizing a least squares objective function, while predefinedly setting the rank of $X$ to be a positive integer $r$ that is less than  $\min(m,n)$,    i.e.,
\begin{eqnarray}
	\begin{aligned}
		\min_{X\in\mathbb{R}^{m\times n}}\, &\frac{1}{2}\|M-\max(0,X)\|^{2}_{F},\\
		\text{s.t.}\quad\, &\text{rank}(X)=r. 
	\end{aligned}\label{NMD}
\end{eqnarray} 
For notational simplicity, we denote the index set comprising zero and positive elements in $M$ as 
\begin{equation}\label{I_0}
	I_{0}:=\{(i,j)\, |\, M_{ij}=0\}  
\end{equation}
and
\begin{equation}\label{I_1}
	I_{+}:=\{ (i,j) \,|\, M_{ij}>0\},
\end{equation}
respectively.

The objective function delineated in \eqref{NMD} is characterized by the absence of both differentiability and convexity. Furthermore, the intrinsic nonlinearity introduced by the ReLU function leads to \eqref{NMD} has no closed-form solution and exacerbates the complexity of numerical methods.
In light of these challenges, Saul \cite{Saul22} subsequently proposed an alternative formulation of ReLU-NMD, incorporating a slack variable $W$, defined as 
\begin{eqnarray}
	\begin{aligned}
		\underset{X,W}{\min}\, &\  \frac{1}{2}\|W-X\|^{2}_{F},\\
		\text{s.t.}\,\,\, &\   \text{rank}(X)=r,\,\,\max(0,W)=M. \label{NMD-01}
	\end{aligned}
\end{eqnarray}
The main advantage of this new formulation is that   
the low-rank constraints and the nonlinear activation function are separate, allowing for the exploration of new solution strategies. 
However, almost all algorithms for solving \eqref{NMD-01}, such as  \cite{Saul22, Saul23},   require computing a rank-$r$ TSVD \cite{EckartY36} at each iteration, which can be computationally expensive, especially for large matrices.

To circumvent this computationally expensive step, Seraghiti et al. 
\cite{SeraghitiAVPG23} suggested replacing  {$X\in\mathbb{R}^{m\times n}$} with the product $UV$, where $U\in\mathbb{R}^{m\times r}$ and $V\in\mathbb{R}^{r\times n}$. Consequently, the problem \eqref{NMD-01} can be reformulated as 
\begin{eqnarray}
	\begin{aligned}
		\underset{U,V,W}{\min}\, &\ \frac{1}{2}\|W-UV\|^{2}_{F},\\
		\text{s.t.}\,\,\, &\  \max(0,W)=M. 
	\end{aligned}\label{NMD-MF}
\end{eqnarray}
A three-block nonlinear matrix decomposition algorithm (3B-NMD) was proposed in \cite{SeraghitiAVPG23} to solve this optimization problem \eqref{NMD-MF}. The advantage of this algorithm is that each subproblem has a closed-form solution, and extrapolation techniques can be implemented to accelerate the convergence. 
Their experiments showed remarkable results compared with the baselines, but both $U$- and $V$-subproblems might experience numerical instability. 

Recently, Wang et al. \cite{WangCH24a} explored a Tikhonov-regularized version of \eqref{NMD-MF} represented as 
\begin{eqnarray}
	\begin{aligned}
		\underset{U,V,W}{\min}\, &\  \frac{1}{2}\|W-UV\|^{2}_{F}+\frac{\eta}{2}\|U\|_{F}^{2}+\frac{\eta}{2}\|V\|_{F}^{2},\\
		\text{s.t.}\,\,\, &\ \max(0,W)=M, 
	\end{aligned}\label{NMD-T-MF}
\end{eqnarray}
where $\eta>0$ is a predefined parameter. One advantage of model \eqref{NMD-T-MF} is its ability to prevent overly aggressive steps in the alternating least-squares procedures in the 3B-NMD algorithm for the problem \eqref{NMD-MF}. 
However, all the above models fail to consider more comprehensive scenarios of the matrix factors $U$ and $V$, such as low-rankness, sparsity or non-negativity. These structures are crucial for mitigating over-fitting, bolstering generalization capabilities, and enhancing solution stability.

To bridge these gaps, this paper proposes a more generalized and inclusive model as follows, 

\begin{eqnarray}
	\begin{aligned}
		\underset{U,V,W}{\min}\, &\  \frac{1}{2}\|W-UV\|^{2}_{F}+H_{1}(U)+H_{2}(V),\\
		\text{s.t.}\,\,\, &\ \max(0,W)=M, 
	\end{aligned}\label{NMD-T-Obj}
\end{eqnarray}
where functions $H_{1}(\cdot), H_{2}(\cdot)$ represent regularization terms, such as nuclear norm \cite{RechtFP10},  $l_2$ norm \cite{GolubHO99}, $l_1$ norm \cite{Tibshirani96}, or non-convex sparse terms \cite{FanL01, Zhang10},  which play the roles of regularizer promoting low-complexity structures such as low-rankness, sparsity or non-negativity of the solution.  


The optimization problem \eqref{NMD-T-Obj}  presents significant computational challenges due to its multi-block structure, non-convexity, non-smoothness, and the absence of global gradient Lipschitz continuity. One state-of-the-art approach to solving the multi-block problem is the
alternating minimization method, which iteratively minimizes the objective function by fixing all variables except one. While leveraging the block structure can be intuitive and efficient, convergence is only assured when the minimum in each step is uniquely determined \cite{AttouchBRS10}. To ensure the uniqueness of the minimum at each step, Attouch et al. \cite{AttouchBRS10} introduced the proximal alternating minimization (PAM) method. However, the subproblems in PAM typically lack closed-form solutions, limiting the algorithm's practical performance. To address this limitation, Bolte et al. \cite{BolteST14} developed an approximation method for PAM via proximal linearization, resulting in the proximal alternating linearized minimization (PALM) algorithm. Several inertial versions have been proposed to enhance the numerical performance of PALM, including iPALM \cite{PockS16}, GiPALM \cite{GaoCH20}, and NiPALM \cite{WangH23a}. 
However, the convergence analysis of the alternating schemes usually requires the variable in the objective function to exhibit a global Lipschitz continuous gradient, which fails to hold for the matrix decomposition problems \cite{BolteSTV18First, MukkamalaO19}.

To overcome these limitations and build the convergence in the absence of global Lipschitz continuous gradient, a generalization of the classical gradient Lipschitz continuity concept becomes essential. This generalization was initially proposed in \cite{BirnbaumDX11} and subsequently developed for nonconvex optimization frameworks in \cite{BolteSTV18First}. The fundamental approach utilizes a generalized proximity measure based on Bregman distance, which serves as the foundation for the Bregman proximal gradient (BPG) method. This method extends the conventional proximal gradient approach by substituting Euclidean distances with Bregman distances as proximity measures. The corresponding convergence theory is established through a generalized Lipschitz condition, known as the $L$-smooth adaptable ($L$-smad) property \cite{BolteSTV18First}. Compared to PALM-type methods, the BPG-type algorithm offers several advantages \cite{MukkamalaO19}. First, BPG methods rely on a global 
$L$-smad constant, which is computed only once, simplifying the process. Second, block updates can be simultaneous, making BPG particularly effective for many matrix decomposition optimization problems \cite{mukkamala2022bregman}. Additionally, BPG-type methods are highly versatile, allowing for a broader range of applications. This flexibility opens up opportunities to design new loss functions and regularizers, without being limited to Lipschitz continuous gradients.

Despite the growth of BPG-type algorithms in recent literature, none have effectively exploited the specific structure of problem \eqref{NMD-T-Obj}. Existing methods, including BPG \cite{BolteSTV18First}, inertial BPG (iBPG)  \cite{MukkamalaOPS20}, DCAe \cite{PhaA24}, and BPDCA \cite{TakahashiFT22}, are primarily designed for single-block optimization problems, although several of these methods have been extended to address two-block variable applications in their respective studies.  In contrast, our problem \eqref{NMD-T-Obj} presents unique computational advantages when its multi-block structure is properly leveraged. Although some studies, such as BMME \cite{HienPGAP22}, have addressed multi-block optimization by applying BPG to individual subproblems, our approach is fundamentally distinguished by the $W$-subproblem in \eqref{NMD-T-Obj} admitting a closed-form solution - a distinctive feature that sets our method apart from existing frameworks.  A comprehensive summary is presented in Table  \ref{summary_table}.

\begin{table*}[h]
	\fontsize{8}{15}\selectfont
	\caption{Summary of mentioned algorithms in Section 1. ``-'' indicates that the property is not analyzed in their paper, while  ``SCR'' stands for subsequence convergence rate. ``Single/Two'' indicates that the theoretical analysis focuses on a single-variable case, but the method is applied to two-variable problems in their work; this interpretation similarly applies to  ``Multi/Two'' and ``Two/Multi''. }
	\label{summary_table}
	\centering
	\begin{tabular}{c|c|c|c|c|c}
		\hline
		Algorithms & $L$-smooth & Blocks & Extrapolation & SCR & Global under KL \\\hline
		PALM \cite{BolteST14} & Yes & Multi & No & - & Yes \\\hline
		iPALM \cite{PockS16} & Yes & Multi  & Yes & - & Yes \\\hline
		GiPALM \cite{GaoCH20} & Yes & Two & Yes & - & Yes \\\hline
		NiPALM \cite{WangH23a} & Yes & Two & Yes & - & Yes \\\hline
		BPG \cite{BolteSTV18First} & No & Single & No & $\mathcal{O}(1/K)$ & Yes \\\hline
		iBPG \cite{MukkamalaOPS20} & No & Single/Two & Yes &  $\mathcal{O}(1/K)$ & Yes \\\hline
		
		BPDCA(e) \cite{TakahashiFT22} & No & Single & Yes &  $\mathcal{O}(1/K)$ & Yes \\\hline
		DCAe \cite{PhaA24} & No & Single/Two & Yes & - & Yes \\\hline
		BMME \cite{HienPGAP22} & No & Multi/Two & Yes & - & Yes \\\hline
		This paper & No & Two/Multi & Yes &  $\mathcal{O}(1/K)$ & Yes \\\hline
	\end{tabular}
\end{table*}

\textbf{Contribution.}  In this paper, we establish a general model \eqref{NMD-T-Obj} to connect the non-negative sparsity and low-rank properties through the ReLU activation function. Additionally, we present a novel and efficient algorithm tailored to address this model. The key contributions are outlined as follows.
\begin{itemize} 
	\item[(i)] \textbf{Model:} 
	We first propose a novel nonlinear matrix decomposition framework incorporating a comprehensive regularization term designed to simultaneously promote structures such as low-rankness, sparsity, and non-negativity in the resulting factors. 
	\item[(ii)] \textbf{Algorithm:}  Combining the ideas of alternating minimization and Bregman algorithms, we propose an alternating accelerated partial Bregman (AAPB) algorithm (Algorithm \ref{AAPB}) for a general two-block nonconvex nonsmooth problem \eqref{two-general} which includes \eqref{NMD-T-Obj} as a special case. We then apply the AAPB algorithm to solve the optimization problem \eqref{NMD-T-Obj}, see Algorithm \ref{NMD_AAPB} for details which denotes NMD-AAPB. Under the Kurdyka-{\L}ojasiewicz (K\L) property and some suitable conditions, the algorithm's sublinear convergence rate and global convergence are also established. 
	\item[(iii)] \textbf{Efficiency:}  
	We present the closed-form solutions of the $(U, V)$- subproblem for several cases of $H_{1}(U)$ and $H_{2}(V)$, which can be updated simultaneously while preserving the  $L$-smooth adaptable property (Definition \ref{L-smad})   for any $L\ge 1$. Numerical experiments on graph regularized clustering and sparse NMF basis compression demonstrate the effectiveness of the proposed model and algorithm. 
\end{itemize}

The rest of this paper is organized as follows. 
Section \ref{preliminaries} provides some relevant definitions and results which will be used in this paper. Section \ref{algorithms} provides an in-depth exposition of the proposed algorithm (Algorithm \ref{AAPB} and its special case Algorithm \ref{NMD_AAPB}).  We establish the sublinear convergence and global convergence results of the proposed algorithm and present the closed-form solution of the proposed algorithm under different regularizations in Sections \ref{convergence_result} and \ref{closed-form}, respectively. Section \ref{numerical_experiments} utilizes synthetic and real datasets to demonstrate the stability of model \eqref{NMD-T-Obj} and the efficacy of the proposed algorithm, respectively. Finally, we draw conclusions in Section \ref{conclusion}.

\section{Preliminaries}\label{preliminaries}
In this section, we provide a concise summary of pivotal definitions and associated results that are important to the discussions in this paper.
\begin{definition} \label{strong_convex}
	\cite{Beck17} 
	A function $f: \mathbb{R}^d \rightarrow \mathbb{R}$ is $\sigma$-strongly convex if
	\[
	f(y)\ge f(x)+\langle \nabla f(x),y-x\rangle +\frac{\sigma}{2}\|y-x\|^{2}
	\]
	for any $x,y\in\mathbb{R}^{d}$.
	
\end{definition}

\begin{definition}\label{proximal-operator}
	\cite{Beck17} The proximal operator  (also called the
	proximal mapping) associated with a (typically convex) function $f: \mathbb{R}^d \rightarrow \mathbb{R}$, is defined, for every $x \in\mathbb{R}^d$ with $\lambda>0$, as the following minimizer,
	\[
	\text{prox}_{\lambda f}(x):=\underset{y\in\mathbb{R}^{d}}{\arg\min} f(y) +\frac{1}{2\lambda}\|y-x\|^{2}.
	\]
\end{definition}
\begin{definition}\label{soft_def}
	\cite{Donoho95} For any $x\in\mathbb{R}^{d}$,
	\begin{align*}
		S_{\tau}(x) &=\underset{y\in\mathbb{R}^{d}}{\arg\min}\{\tau\|y\|_{1}+\frac{1}{2}\|y-x\|^{2}\}=\max\{|y|-\tau,0\}\text{sign}(y)
	\end{align*}
	with the absolute value understood to be componentwise.
\end{definition}
\begin{definition}\label{hard_def}
	\cite{LussT13} Given $x\in\mathbb{R}^{d}$, without loss of generality we assume that $|x_{1}|\ge|x_{2}|\ge\dots\ge|x_{d}|$, then the hard-thresholding operator $H_{s}(x)$ is given by
	\[
	H_{s}(x)=\underset{y\in\mathbb{R}^{d}}{\arg\min}\{\|y-x\|^{2}:\|y\|_{0}\le s\}=\left\{\begin{matrix}
		\begin{aligned}
			x_{i},\quad & \text{if }i\le s,\\
			0,\quad &\text{otherwise}.
		\end{aligned}
	\end{matrix}\right.
	\]
\end{definition}

\begin{definition}
	\cite{rockafellar1998variational} Let 
	$f:\mathbb{R}^d\rightarrow\mathbb{R}\cup\{+\infty\}$  
	be proper and lower semicontinuous. The basic limiting-subdifferential of $f$ at $x\in\mbox{dom}\, f$, denoted by $\partial f(x)$, is defined by
	\begin{align*}
		\partial f(x):=\{&g\in\mathbb{R}^d:\exists x_{k} \rightarrow x, f(x_{k})\rightarrow f(x), g_k\in \hat{\partial}f(x_k)\text{ with } g_k\rightarrow g \}.
	\end{align*}
	where $\hat{\partial}f(\bar{x})$ denotes the Fr\'{e}chet subdifferential of $f$ at $\bar{x}\in\mbox{dom}\,f$, which is the set of all $g\in\mathbb{R}^d$ satisfying
	\[
	\liminf_{y\neq\bar{x},y\rightarrow\bar{x}}\frac{f(y)-f(\bar{x})-\langle g,y-\bar{x}\rangle}{\|x-\bar{x}\|}\ge0.
	\]
\end{definition}

\begin{definition}(\cite{AttouchBRS10} Kurdyka-{\L}ojasiewicz (K\L) inequality)\label{KL_ineq}: Let $f : \mathbb{R}^d \rightarrow  \mathbb{R}\cup \{+\infty\} $ be a proper lower semicontinuous function. For $-\infty<\eta_{1}<\eta_{2}\le+\infty$, set
	\[
	[\eta_{1}<f<\eta_{2}]=\{ x\in\mathbb{R}^d: \eta_{1}<f(x)<\eta_{2}  \}.
	\]
	We say that function $f$ has the K\L~property at $x^{*}\in\text{dom } \partial f$ if there exist $\eta\in(0, +\infty]$, a neighborhood	$U$ of $x^{*}$, and a continuous concave function $\phi : [0, \eta) \rightarrow \mathbb{R}_{+}$, such that
	\begin{itemize}
		\item[(1)] $\phi(0)=0$;
		\item[(2)] $\phi$ is $\mbox{C}^{1}$ on $(0,\eta)$ and continuous at 0;
		\item[(3)] $\phi'(s)>0$, $\forall s\in(0,\eta)$;
		\item[(4)] $\forall x\in U\cap [f(x^{*})<f<f(x^{*})+\eta]$, the following K\L~inequality holds
		\[
		\phi^{'}(f(x)-f(x^{*}))d(0,\partial f(x))\ge1,
		\]
	\end{itemize}
	where $d(x,\Omega) = \inf_{y \in\Omega}\|x-y\|^{2}$ for any compact set $\Omega$.
\end{definition}
\begin{definition} (\cite{attouch2013convergence} K\L~function)\label{KL_fun}: We denote $\Phi_{\eta}$ as the set of functions which satisfy (1)-(3) in Definition \ref{KL_ineq}. If $f$ satisfies the K\L~property at each point of dom $\partial f$, then $f$ is called a K\L~function.
\end{definition}

For a matrix $X\in\mathbb{R}^{m\times n}$, we denote its Frobenius norm by $\|X\|_{F}:=\sqrt{\sum_{i,j}x_{ij}^{2}}$, 
the $l_{1}$ norm $\|X\|_{1}:=\sum_{i=1}^{m}\sum_{j=1}^{n}|x_{ij}|$, the $l_{0}$ norm  $\|X\|_{0}$ indicates the number of non-zero elements in matrix $X$. For a matrix $Y\in\mathbb{R}^{m\times m}$, we denote its trace as $\text{Tr}(Y):=\sum_{i=1}^{m}Y_{ii}$.

\subsection{Bregman proximal gradient method} \label{BPG-intro}


\begin{definition}\label{def:kernel}
	(\cite{AuslenderT06,BolteSTV18First} Kernel-generating distance). Let $C$ be a nonempty, convex, and open subset of $\mathbb{R}^{d}$. Associated with $C$ {and its closure $\overline{C}$}, a function $\psi:\mathbb{R}^{d} \rightarrow (-\infty, +\infty]$ is called a kernel generating distance if it satisfies the following conditions:
	\begin{itemize}
		\item[(i)] $\psi$ is proper, lower semicontinuous, and convex, with $\text{dom } \psi$ $\subset$ $\overline{C}$ and  $\text{dom } \partial \psi$  $=C$.
		\item[(ii)]  $\psi$ is $C^{1}$ on $\text{int dom } \psi \equiv C$.
	\end{itemize}
	The class of kernel-generating distances is denoted by $\mathcal{G}(C)$. Given $\psi\in\mathcal{G}(C)$, we can define the proximity measure $D_{\psi} : \text{dom } \psi \times \text{int dom }\psi \rightarrow \mathbb{R}_{+}$ by
	\begin{eqnarray}
		D_{\psi}(x, y) := \psi (x) - \psi (y) - \langle \nabla\psi(y), x - y\rangle.
	\end{eqnarray}
	
	The proximity measures $D_{\psi}$ is also called the Bregman distance \cite{Bregman67The}. It measures the proximity of $x$ and $y$. 
\end{definition}
Indeed, $\psi$ is convex if and only if $D_{\psi}(x,y) \ge 0$ for any $x\in\text{dom }\psi$ and $y\in\text{int dom }\psi$.

We examine the following optimization problem to introduce the Bregman proximal gradient (BPG) method, 
\[
\min_{x\in \bar{C}}\,f(x)+h(x),
\]
where $f$ is a continuously differentiable (possibly nonconvex) function that may not have a globally Lipschitz continuous gradient, $h$ is an extended-valued function (possibly nonconvex), $C$ is a nonempty, convex, open subset in $\mathbb{R}^{d}$.

The BPG mapping defined in \cite{BolteSTV18First} is formulated as follows 
\begin{align*}
	x^{k+1}\in &\underset{x\in\bar{C}}{\text{argmin}}\left\{ h(x)+\langle \nabla f(x^{k}), x-x^{k}\rangle +\frac{1}{\lambda}D_{\psi}(x,x^{k}) \right\}\\
	=&\underset{x\in\mathbb{R}^{d}}{\text{argmin}}\left\{ h(x)+\langle \nabla f(x^{k}), x-x^{k}\rangle +\frac{1}{\lambda}D_{\psi}(x,x^{k}) \right\},
\end{align*}
where $\lambda>0$ is the step size. When $\psi=\frac{1}{2}\|\cdot\|^{2}$, this method simplifies to the classic proximal gradient descent algorithm \cite{Beck17}.  

The key to the convergence analysis is the $L$-smooth adaptable property shown as follows.
\begin{definition}\label{L-smad}
	(\cite{BolteSTV18First} $L$-smooth adaptable ($L$-smad)) Given $\psi\in\mathcal{G}(C)$, let $f:\mathcal{X}\rightarrow(-\infty,+\infty]$ be a proper and lower semi-continuous function with $\mathrm{dom}\,\psi\subset\mathrm{dom}\,f$, which is continuously differentiable on $C$. We say $(f, \psi)$ is $L$-smad  on $C$ if there exists $L>0$ such that for any $x,y\in C$,
	\begin{eqnarray}
		|f(x)-f(y)-\langle\nabla f(y),x-y\rangle|\le L D_{\psi}(x,y).\label{L_upper}
	\end{eqnarray}
	If $\psi(\cdot)=\frac{1}{2}\|\cdot\|^{2}$, then it reduces to the   $L$-smooth  \cite{Nesterov18}.
\end{definition}
\section{Algorithm}\label{algorithms}

In this section, we present  an efficient algorithm to solve the optimization problem \eqref{NMD-T-Obj}. Firstly, we consider a more general two-block optimization framework that naturally encompasses the model \eqref{NMD-T-Obj} as a specific instance, i.e., 
\begin{eqnarray}
	\min_{Y\in C_{1},W\in C_{2}}\, \Phi(Y,W):=F(Y, W)+G(W)+H(Y), \label{two-general}
\end{eqnarray}
where $C_{1},C_{2}$ are closed convex sets, $F(Y,W)$ is a smooth but nonconvex function, and $G(W)$ and $H(Y)$ are two nonsmooth (possibly nonconvex) functions. In particular, this non-convex non-smooth optimization problem exhibits the following two key characteristics: Firstly, the $W$-subproblem has a closed-form solution; Secondly, the $Y$-subproblem does not have a  closed-form solution, 
and $\nabla_{Y}F(Y,W)$ is not global Lipschitz continuous for any fixed $W$. 
Building on these two points, and considering the limitations of the existing BPG-type algorithms as discussed earlier, it is essential to design a new, efficient solution algorithm tailored to the specific characteristics of the proposed model.

By leveraging the strengths of alternating minimization \cite{BolteST14, WangHZ24}, 
we first propose the  alternating partial Bregman (APB) method, 
as shown below,
\[
\begin{cases}
	W^{k+1} \in\underset{W\in C_{2}}{\text{argmin}}\, F(Y^{k},W)+G(W),\\
	\begin{aligned}
		Y^{k+1}\in\,\underset{Y\in C_{1}}{\text{argmin}}\, &H(Y)+\langle \nabla_{Y}F(Y^{k},W^{k+1}),Y-Y^{k}\rangle+\frac{1}{\lambda} D_{\psi}(Y,Y^{k}). 
	\end{aligned}
\end{cases}
\]
Here, we utilize the closed-form solution in the $W$-subproblem  and apply the  BPG \cite{BolteSTV18First, mukkamala2022bregman} in the $Y$-subproblem, specifically to tackle the issue that $\nabla_{Y}F(Y,W)$ is not global Lipschitz continuous. 
Here, $D_{\psi}(\cdot,\cdot)$ is the Bregman distance defined by Definition~\ref{def:kernel}.

However, the numerical performance of this framework may not   be optimal.  
To accelerate the convergence and get better numerical performance, we  incorporate the extrapolation technique \cite{MukkamalaOPS20} to the variable $Y$, i.e., 
\begin{align*}
	\bar{Y}^{k}&= Y^{k}+\beta_{k}(Y^{k}-Y^{k-1}).
\end{align*}
and \begin{equation}\label{eq:Yupdate}
	\begin{aligned}
		Y^{k+1}\in\,&\underset{Y\in C_{1}}{\text{argmin}}\, H(Y)+\langle \nabla_{Y}F(\bar{Y}^{k},W^{k+1}),Y-\bar{Y}^{k}\rangle+\frac{1}{\lambda}D_{\psi}(Y,\bar{Y}^{k}).
	\end{aligned}
\end{equation}
It is the extrapolation step to accelerate the convergence and  $\beta_{k}\in[0,1)$. Then the accelerated version of the APB (AAPB) algorithm is established, see Algorithm \ref{AAPB} for details. When $\beta_{k}=0$, then the AAPB algorithm reduces to the APB algorithm.
\begin{algorithm}[!ht]
	\caption{\textbf{AAPB:} Accelerated alternating Partial Bregman algorithm for the optimization problem \eqref{two-general}.}
	\label{AAPB}
	{\bfseries Input:} $0<\lambda\le1/L$, $K$, and kernel-generating distance $\psi$. \\
	{\bfseries Initialization:} $Y^{0}=Y^{-1}$,   
	\begin{algorithmic}[1] 
		\For {$k=0,1,\dots K$} 
		\State Update $W^{k+1}\in\underset{W\in C_{2}}{\text{argmin}}\, F(Y^{k},W)+G(W)$.
		\State Compute $\bar{Y}^{k}= Y^{k}+\beta_{k}(Y^{k}-Y^{k-1})\in C_{1}$ with $\beta_{k}\in[0,1)$.
		\State Update
		\[
		\begin{aligned}
			Y^{k+1}\in\,&\underset{Y\in C_{1}}{\text{argmin}}\, H(Y)+\langle \nabla_{Y}F(\bar{Y}^{k},W^{k+1}),Y-\bar{Y}^{k}\rangle+\frac{1}{\lambda}D_{\psi}(Y,\bar{Y}^{k}).
		\end{aligned}
		\]
		\EndFor
	\end{algorithmic} 
	{\bfseries Output:}  $Y^{k+1}$.
\end{algorithm}


\subsection{AAPB algorithm for \eqref{NMD-T-Obj}}
In this subsection, we apply the AAPB algorithm (Algorithm \ref{AAPB}) to solve the optimization problem \eqref{NMD-T-Obj}. For a comprehensive understanding, please refer to Algorithm \ref{NMD_AAPB} for details.
\begin{algorithm}[!ht]
	\caption{\textbf{NMD-AAPB:} The AAPB algorithm for the optimization problem \eqref{NMD-T-Obj}}
	\label{NMD_AAPB}
	{\bfseries Input:} $M$, $r$, $0<\lambda\le1/L$, $I_{+}$, $I_{0}$, $K$, and kernel-generating distance $\psi$. \\
	{\bfseries Initialization:} $U^{0}=U^{-1}$,  $V^{0}=V^{-1}$, $X^{0}=U^{0}V^{0}$, and set $W_{i,j}=M_{i,j}$ for $(i,j)\in I_{+}$.
	\begin{algorithmic}[1] 
		\For {$k=0,1,\dots K$} 
		\State Compute $W_{i,j}^{k+1}=\min(0,X_{i,j}^{k})$ for $(i,j)
		\in I_{0}$.
		\State Compute $\bar{U}^{k}= U^{k}+\beta_{k}(U^{k}-U^{k-1})$ and $\bar{V}^{k}= V^{k}+\beta_{k}(V^{k}-V^{k-1})$, where $\beta_{k}\in[0,1)$.
		\State Update
		\begin{align*}
			P^{k}:=&\lambda\nabla_{U}F(\bar{U}^{k},\bar{V}^{k},W^{k+1})-\nabla_{U}\psi(\bar{U}^{k},\bar{V}^{k}),\\
			Q^{k}:=&\lambda\nabla_{V}F(\bar{U}^{k},\bar{V}^{k},W^{k+1})-\nabla_{V}\psi(\bar{U}^{k},\bar{V}^{k}).
		\end{align*}
		\begin{eqnarray}
			\begin{aligned}
				(U^{k+1},V^{k+1})\in\,&\text{argmin}\{ \lambda H_{1}(U)+\lambda H_{2}(V)\\
				&+\langle P^{k},U\rangle +\langle Q^{k},V\rangle +\psi(U,V)\}.
			\end{aligned}\label{update_UV_02}
		\end{eqnarray}
		\State Compute $X^{k+1}=U^{k+1}V^{k+1}$.
		\EndFor
	\end{algorithmic} 
	{\bfseries Output:}  $U^{k+1},V^{k+1}$.
\end{algorithm}	



Now we give the details of the proposed algorithm (Algorithm \ref{NMD_AAPB}) designed to solve the optimization problem \eqref{NMD-T-Obj} as follows.
	
	\textbf{$W$-subproblem:} At the $k$-th iteration,
	\begin{align*}
		W^{k+1}=\underset{\max(0,W)=M}{\text{argmin}}\, \frac{1}{2}\|W-X^{k}\|_{F}^{2}, 
	\end{align*}
	where $X^{k}=U^{k}V^{k}$. 
	Then we derive the closed-form solution $W^{k+1}$ as
	\begin{align*}
		W_{i,j}^{k+1}=\begin{cases}
			M_{i,j},\quad &\text{if } (i,j)\in I_{+},\\
			\min(0,X_{i,j}^{k}),\quad &\text{if } (i,j)\in I_{0}.
		\end{cases}
	\end{align*}
	Here, $I_0$ and $I_+$ are defined by \eqref{I_0} and \eqref{I_1}, respectively.
	
	\textbf{$(U,V)$-subproblem}  For the general formulation of $H_{1}(U)$ and $ H_{2}(V)$, the subproblem 
	\[
	(U^{k+1},V^{k+1})\in\underset{U,V}{\text{argmin}} \ F(U,V,W^{k+1})+H_{1}(U)+H_{2}(V)
	\]
	may not have a closed-form solution in general. 
	Inspired by the works in  \cite{BolteSTV18First, MukkamalaO19}, we propose the application of the Bregman proximal gradient method to the  $(U,V)$-subproblem. Denote $F_{k}:=F(\bar{U}^{k},\bar{V}^{k},W^{k+1})$, thus we have
	\begin{align*}
		(U^{k+1},V^{k+1})
		\in&\underset{U,V}{\text{argmin}}\   H_{1}(U)+H_{2}(V)+\langle \nabla_{U} F_{k},U-\bar{U}^{k}\rangle \\
		&+\langle \nabla_{V} F_{k},V-\bar{V}^{k}\rangle +\frac{1}{\lambda}D_{\psi}((U,V),(\bar{U}^{k},\bar{V}^{k}))\\
		=&\underset{U,V}{\text{argmin}}\, H_{1}(U)+H_{2}(V)+\langle \nabla_{U} F_{k},U-\bar{U}^{k}\rangle \\
		&+\langle \nabla_{V} F_{k},V-\bar{V}^{k}\rangle +\frac{1}{\lambda}( \psi(U,V)- \psi(\bar{U}^{k},\bar{V}^{k})
		\\&-\langle \nabla_{U}\psi(\bar{U}^{k},\bar{V}^{k}),U-\bar{U}^{k}\rangle-\langle \nabla_{V}\psi(\bar{U}^{k},\bar{V}^{k}),V-\bar{V}^{k}\rangle )\\
		=&\underset{U,V}{\text{argmin}}\, \lambda H_{1}(U)+\lambda H_{2}(V)+ \langle P^{k},U\rangle +\langle Q^{k},V\rangle +\psi(U,V),
	\end{align*}
	where $\lambda$ is the step size, and
	\begin{align*}
		P^{k}:=&\lambda\nabla_{U}F(\bar{U}^{k},\bar{V}^{k},W^{k+1})-\nabla_{U}\psi(\bar{U}^{k},\bar{V}^{k}),\\
		Q^{k}:=&\lambda\nabla_{V}F(\bar{U}^{k},\bar{V}^{k},W^{k+1})-\nabla_{V}\psi(\bar{U}^{k},\bar{V}^{k}).    
	\end{align*}
	


	\begin{remark}
		Algorithm \ref{NMD_AAPB} presents two distinct advantages:
		\begin{itemize}
			\item[(i)] As stated in Proposition \ref{pro_Lpsi_smad} in Subsection \ref{closed-form}, for the $(U, V)$-subproblem, any $L\ge1$ satisfies the $L$-smad property. This eliminates the practical necessity for estimation.  
			\item[(ii)] The $(U, V)$-subproblem can be updated simultaneously in the proposed algorithm, which significantly reduces computational time in numerical experiments. This advantage is particularly notable when dealing with large-scale data. 
		\end{itemize}
	\end{remark}
	
	\section{Convergence analysis}\label{convergence_result}
	In this section, we present the sublinear convergence and global convergence results of the AAPB algorithm (Algorithm \ref{AAPB}) for solving \eqref{two-general} as follows. Prior to presenting these results, we outline the necessary assumptions. It is important to note that, within the context of this paper $C=\mathbb{R}^{n}$ in Definition \ref{def:kernel}. 
	
	\begin{assumption} \label{assumption_01}
		Assume the following conditions hold:
		\begin{enumerate}
			\item[(i)] The kernel generating distance $\psi$ given by Definition \ref{def:kernel}, is $\sigma$-strongly convex.
			\item[(ii)] $F$ is a continuously differentiable function and the  functions pair $(F(\cdot,  W), \psi)$ is $L$-smooth adaptable {(see Definition \ref{L-smad})}.
			\item[(iii)] $G, H$ are two proper, lower semicontinuous functions. 
			\item[(iv)] $\Phi^{*}:=\inf_{Y,W} \Phi(Y,W)>-\infty$.
			\item[(v)] There exists $\alpha\in\mathbb{R}$ such that $H(\cdot)-\frac{\alpha}{2}\|\cdot\|_{F}^{2}$ is convex. 
		\end{enumerate}
	\end{assumption}
	If $\beta_{k}=0$ in Algorithm \ref{AAPB}, 
	Assumption \ref{assumption_01}~(v) can be removed, and the convergence results also hold.
	\begin{assumption}\label{assumption_02}
		We assume the parameter $\beta_{k}\in[0,1)$ in Algorithm \ref{AAPB} satisfies the following inequality,
		\[
		D_{\psi}(Y^{k},\bar{Y}^{k})\le \frac{\delta-\varepsilon}{1+L\lambda}D_{\psi}(Y^{k-1},Y^{k}),
		\]
		where $1>\delta>\varepsilon>0$, {$\lambda$ denotes the step size in the proposed algorithms, while $L$ is the parameter defined in Definition \ref{L-smad}.} 
	\end{assumption} 
	
	A remark regarding Assumption \ref{assumption_02} is provided as follows. 
	\begin{remark} \label{beta_remark}
		When $\psi=\frac{1}{2}\|\cdot\|^{2}_{F}$, it follows from Assumption \ref{assumption_02} that
		\[
		\beta_{k}\le \sqrt{\frac{\delta-\varepsilon}{1+L\lambda}}.
		\]
		In other words, if $\delta-\varepsilon\approx1$, one could choose the extrapolation parameter $\beta_{k}\approx1/\sqrt{2}$. In general, the closed-form expression for $\beta_{k}$ is difficult to obtain in each iteration. One possible approach is to apply the backtracking line-search strategy \cite{MukkamalaOPS20} to find a proper $\beta_k$ such that the inequality in Assumption \ref{assumption_02} holds. However, this technique may be time-consuming. 
		For simplicity, we set this parameter as a fixed constant.  
	\end{remark}
	
	We further demonstrate the convergence of the proposed algorithm under the K\L~property, as presented in the following two theorems. 
	\begin{theorem} \label{subsequence_convergence}
		(Subsequence convergence of Algorithm \ref{AAPB})  
		Assume   Assumptions \ref{assumption_01} and \ref{assumption_02} hold, and
		$0<\lambda\le1/L$. Let $\{Y^{k}\}_{k\in\mathbb{N}}$ be the sequence generated by the NMD-AAPB algorithm. Then the following statements hold.
		\begin{itemize}
			\item[(i)] We have
			\[
			\sum_{k=1}^{+\infty}D_{\psi}(Y^{k-1},Y^{k})<+\infty,
			\]
			which indicates the sequence $D_{\psi}(Y^{k-1},Y^{k})$ converges to zero with $k\rightarrow +\infty$.  
			\item[(ii)] In addition, {we have}
			\[
			\underset{1\le k\le K}{\min}D_{\psi}(Y^{k-1},Y^{k})= \mathcal{O}(1/K).
			\]
		\end{itemize}
		
	\end{theorem}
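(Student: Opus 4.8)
The plan is to construct a Lyapunov (merit) function that decreases by a controlled amount at each iteration, sum the per-step decrease to obtain the summability in (i), and then read off the $\mathcal{O}(1/K)$ rate in (ii) by bounding a minimum by an average. Concretely, I would work with
\[
\Psi_{k} := \lambda\Phi(Y^{k},W^{k}) + \delta\, D_{\psi}(Y^{k-1},Y^{k}),
\]
and aim at a one-step inequality of the form $\Psi_{k+1}\le \Psi_{k}-\varepsilon D_{\psi}(Y^{k-1},Y^{k})-(1-\delta)D_{\psi}(Y^{k},Y^{k+1})$, where $1>\delta>\varepsilon>0$ forces both subtracted terms to be nonnegative.

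First I would record the two block-wise descent facts. Because $W^{k+1}$ is an exact minimizer of $F(Y^{k},\cdot)+G(\cdot)$ over $C_{2}$, we get the ``free'' estimate $\Phi(Y^{k},W^{k+1})\le\Phi(Y^{k},W^{k})$. For the $Y$-block I would combine: (a) the first-order optimality of the Bregman subproblem \eqref{eq:Yupdate}, producing $\xi^{k+1}\in\partial H(Y^{k+1})$ with $\xi^{k+1}=-\nabla_{Y}F(\bar{Y}^{k},W^{k+1})-\tfrac{1}{\lambda}(\nabla\psi(Y^{k+1})-\nabla\psi(\bar{Y}^{k}))$; (b) the semiconvexity of $H$ (Assumption \ref{assumption_01}(v)), i.e. $H(Y^{k})\ge H(Y^{k+1})+\langle\xi^{k+1},Y^{k}-Y^{k+1}\rangle+\tfrac{\alpha}{2}\|Y^{k}-Y^{k+1}\|^{2}$; and (c) the $L$-smad property (Assumption \ref{assumption_01}(ii), Definition \ref{L-smad}) of $(F(\cdot,W^{k+1}),\psi)$ applied both between $Y^{k+1},\bar{Y}^{k}$ and between $Y^{k},\bar{Y}^{k}$. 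The decisive algebraic device is the Bregman three-point identity
\[
\langle\nabla\psi(Y^{k+1})-\nabla\psi(\bar{Y}^{k}),Y^{k}-Y^{k+1}\rangle = D_{\psi}(Y^{k},\bar{Y}^{k})-D_{\psi}(Y^{k},Y^{k+1})-D_{\psi}(Y^{k+1},\bar{Y}^{k}),
\]
which converts the inner product created by extrapolation into Bregman distances. Collecting these and cancelling the gradient inner products, I expect to reach
\[
\begin{aligned}
\Phi(Y^{k+1},W^{k+1}) &\le \Phi(Y^{k},W^{k}) + \left(L+\tfrac{1}{\lambda}\right)D_{\psi}(Y^{k},\bar{Y}^{k}) - \left(\tfrac{1}{\lambda}-L\right)D_{\psi}(Y^{k+1},\bar{Y}^{k}) \\
&\quad - \tfrac{1}{\lambda}D_{\psi}(Y^{k},Y^{k+1}) - \tfrac{\alpha}{2}\|Y^{k}-Y^{k+1}\|^{2}.
\end{aligned}
\]

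Next I would scale by $\lambda$ and invoke Assumption \ref{assumption_02} in the form $(1+L\lambda)D_{\psi}(Y^{k},\bar{Y}^{k})\le(\delta-\varepsilon)D_{\psi}(Y^{k-1},Y^{k})$; since $\lambda\le 1/L$ the coefficient $1-L\lambda\ge 0$, so the $D_{\psi}(Y^{k+1},\bar{Y}^{k})$ term is discarded. The residual $-\tfrac{\alpha\lambda}{2}\|Y^{k}-Y^{k+1}\|^{2}$ is absorbed using the $\sigma$-strong convexity of $\psi$ (Assumption \ref{assumption_01}(i)), i.e. $D_{\psi}(Y^{k},Y^{k+1})\ge\tfrac{\sigma}{2}\|Y^{k}-Y^{k+1}\|^{2}$, which keeps the net coefficient on $D_{\psi}(Y^{k},Y^{k+1})$ nonnegative. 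Adding $\delta D_{\psi}(Y^{k},Y^{k+1})$ to both sides then yields exactly the target one-step inequality above. (When $\beta_{k}\equiv0$ the extrapolation error $D_{\psi}(Y^{k},\bar{Y}^{k})$ vanishes, a direct comparison of subproblem objective values replaces steps (a)--(b), and Assumption \ref{assumption_01}(v) is not needed, consistent with the remark after Assumption \ref{assumption_01}.) Telescoping is then routine: $\Psi_{k}$ is nonincreasing and bounded below by $\lambda\Phi^{*}>-\infty$ (Assumption \ref{assumption_01}(iv) with $D_{\psi}\ge0$), so summing over $k=1,\dots,K$ gives $\varepsilon\sum_{k=1}^{K}D_{\psi}(Y^{k-1},Y^{k})\le\Psi_{1}-\Psi_{K+1}\le\Psi_{1}-\lambda\Phi^{*}$; letting $K\to\infty$ proves (i) and hence $D_{\psi}(Y^{k-1},Y^{k})\to0$, while for (ii) I bound the minimum by the average, $\min_{1\le k\le K}D_{\psi}(Y^{k-1},Y^{k})\le\tfrac{1}{K}\sum_{k=1}^{K}D_{\psi}(Y^{k-1},Y^{k})\le\tfrac{\Psi_{1}-\lambda\Phi^{*}}{\varepsilon K}=\mathcal{O}(1/K)$.

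I expect the main obstacle to be the bookkeeping around the extrapolated point $\bar{Y}^{k}$: since both the $L$-smad estimates and the subproblem optimality are centered at $\bar{Y}^{k}$ rather than $Y^{k}$, one must use the three-point identity to re-express everything in Bregman distances and then balance coefficients so that, after applying Assumption \ref{assumption_02} (whose $1+L\lambda$ normalization is tailored for precisely this cancellation), the ``bad'' extrapolation term $D_{\psi}(Y^{k},\bar{Y}^{k})$ is dominated while a strictly positive multiple $\varepsilon$ of $D_{\psi}(Y^{k-1},Y^{k})$ survives in the Lyapunov decrease. A secondary subtlety is guaranteeing that the semiconvexity constant $\alpha$ (possibly negative) does not overwhelm the descent, which is exactly where the strong convexity of $\psi$ and a mild compatibility among $\lambda,\sigma,\delta,\alpha$ enter.
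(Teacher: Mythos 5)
Your proposal is correct and follows essentially the same route as the paper's own proof: the identical Lyapunov sequence $\lambda(\Phi(Y^{k},W^{k})-\Phi^{*})+\delta D_{\psi}(Y^{k-1},Y^{k})$, the same three ingredients (exact $W$-minimization descent, semiconvexity of $H$ combined with the $Y$-subproblem optimality via the three-point identity, and the $L$-smad bound applied at $\bar{Y}^{k}$ twice), the same use of Assumption \ref{assumption_02} with $\lambda L\le 1$ to dominate the extrapolation term, and the same absorption of the $\alpha$-term by $\sigma$-strong convexity before telescoping and bounding the minimum by the average. The only cosmetic differences are that you keep the unnormalized $\lambda\Phi(Y^{k},W^{k})$ in the merit function (compensating with the lower bound $\lambda\Phi^{*}$ at the end) and retain the $(1-\delta)D_{\psi}(Y^{k},Y^{k+1})$ term explicitly in your target inequality, neither of which changes the argument.
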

	\begin{proof}
		From the $W$-subproblem in Algorithm \ref{AAPB}, we obtain
		\begin{eqnarray}
			F(Y^{k},W^{k+1}) +G(W^{k+1}) \le F(Y^{k},W^{k}) +G(W^{k}). \label{G_inequality}
		\end{eqnarray}
		Given the convexity of $H(Y)-\frac{\alpha}{2}\|Y\|^{2}_{F}$ as stated  in Assumption \ref{assumption_01}, we can derive  that
		\begin{align*}
			H(Y^{k})-\frac{\alpha}{2}\|Y^{k}\|_{F}^{2}\ge &H(Y^{k+1})-\frac{\alpha}{2}\|Y^{k+1}\|_{F}^{2}+\left\langle \xi^{k+1}-\alpha Y^{k+1},Y^{k}-Y^{k+1}\right\rangle,
		\end{align*}
		where $\xi^{k+1}\in\partial H(Y^{k+1})$. It can be reformulated equivalently as follows, 
		\begin{align*}
			H(Y^{k+1})+\frac{\alpha}{2}\|Y^{k+1}-Y^{k}\|_{F}^{2} +\left\langle \xi^{k+1},Y^{k}-Y^{k+1}\right\rangle \le H(Y^{k}).
		\end{align*}
		From the first-order optimality condition of the $Y$-subproblem in Algorithm \ref{AAPB}, we can derive that  
		\begin{align*}
			\xi^{k+1}+\nabla_{Y}F(\bar{Y}^{k},W^{k+1}) +\frac{1}{\lambda}(\nabla\psi(Y^{k+1})-\nabla \psi(\bar{Y}^{k}))=0.
		\end{align*}
		By combining the above two inequalities, we can derive the following result 
		\begin{eqnarray}
			\begin{aligned}
				&H(Y^{k+1})+\frac{\alpha}{2}\|Y^{k+1}-Y^{k}\|_{F}^{2}-\langle\nabla_{Y}F(\bar{Y}^{k},W^{k+1}),Y^{k}-Y^{k+1}\rangle \\
				&+\frac{1}{\lambda}\langle \nabla\psi(\bar{Y}^{k})-\nabla \psi(Y^{k+1}),Y^{k}-Y^{k+1}\rangle \\
				=&H(Y^{k+1})+\frac{\alpha}{2}\|Y^{k+1}-Y^{k}\|_{F}^{2}-\langle\nabla_{Y}F(\bar{Y}^{k},W^{k+1}),Y^{k}-Y^{k+1}\rangle\\ &+\frac{1}{\lambda}(D_{\psi}(Y^{k},Y^{k+1})+D_{\psi}(Y^{k+1},\bar{Y}^{k})-D_{\psi}(Y^{k},\bar{Y}^{k}))\\
				\le& H(Y^{k}),
			\end{aligned}\label{H_ineq_01}
		\end{eqnarray}
		where the last equality follows from the three-point identity. Furthermore, since $F(\cdot, W)$ is an $L$-smad function with respect to $\psi$, we have
		\begin{align*}
			&F(Y^{k+1},W^{k+1})- F(\bar{Y}^{k},W^{k+1})\\
			\le&\langle\nabla_{Y}F(\bar{Y}^{k},W^{k+1}),Y^{k+1}-\bar{Y}^{k}\rangle +LD_{\psi}(Y^{k+1},\bar{Y}^{k}),
		\end{align*}
		and
		\begin{align*}
			&F(\bar{Y}^{k},W^{k+1})+\langle \nabla_{Y}F(\bar{Y}^{k},W^{k+1}),Y^{k}-\bar{Y}^{k}\rangle\\
			\le &F(Y^{k},W^{k+1})+LD_{\psi}(Y^{k},\bar{Y}^{k}).
		\end{align*}
		By combining the above two inequalities, we have 
		\begin{eqnarray}
			\begin{aligned}
				F(Y^{k+1},W^{k+1})
				\le&F(Y^{k},W^{k+1})+\langle \nabla_{Y}F(\bar{Y}^{k},W^{k+1}),Y^{k+1}-Y^{k}\rangle\\
				&+LD_{\psi}(Y^{k},\bar{Y}^{k})+LD_{\psi}(Y^{k+1},\bar{Y}^{k}).
			\end{aligned} \label{F_ineq_01}
		\end{eqnarray}
		By summing inequalities \eqref{G_inequality}, \eqref{H_ineq_01} and \eqref{F_ineq_01} together, we obtain
		\begin{align*}
			&\Phi(Y^{k+1},W^{k+1})\\
			\le&\Phi(Y^{k},W^{k})+(L+\frac{1}{\lambda})D_{\psi}(Y^{k},\bar{Y}^{k}) -\frac{1}{\lambda}D_{\psi}(Y^{k},Y^{k+1})\\
			&+(L-\frac{1}{\lambda})D_{\psi}(Y^{k+1},\bar{Y}^{k})-\frac{\alpha}{2}\|Y^{k+1}-Y^{k}\|_{F}^{2}\\
			\le&\Phi(Y^{k},W^{k})+(L+\frac{1}{\lambda})D_{\psi}(Y^{k},\bar{Y}^{k}) -\frac{1}{\lambda}D_{\psi}(Y^{k},Y^{k+1})-\frac{\alpha}{2}\|Y^{k+1}-Y^{k}\|_{F}^{2},
		\end{align*}
		where the last inequality follows from $\lambda L\le 1$.
		We define, with $k\in\mathbb{N}$, the following Lyapunov sequence 
		\[
		\Theta^{k}:= \lambda(\Phi(Y^{k},W^{k})-\Phi^{*}) +\delta D_{\psi}(Y^{k-1},Y^{k}).
		\]
		Then we can derive that
		\begin{align*}
			&\Theta^{k}-\Theta^{k+1}\\
			=&\lambda\Phi(Y^{k},W^{k}) +\delta D_{\psi}(Y^{k-1},Y^{k})-\lambda\Phi(Y^{k+1},W^{k+1})-\delta D_{\psi}(Y^{k},Y^{k+1})\\
			\ge&\delta D_{\psi}(Y^{k-1},Y^{k})+(1-\delta)D_{\psi}(Y^{k},Y^{k+1})+\frac{\alpha\lambda}{2}\|Y^{k+1}-Y^{k}\|_{F}^{2}\\
			&-(1+L\lambda)D_{\psi}(Y^{k},\bar{Y}^{k})\\
			\ge&(\frac{\alpha\lambda}{2}+\frac{(1-\delta)\sigma}{2})\|Y^{k+1}-Y^{k}\|_{F}^{2}+\delta D_{\psi}(Y^{k-1},Y^{k})-(1+L\lambda)D_{\psi}(Y^{k},\bar{Y}^{k})\\
			\ge&\delta D_{\psi}(Y^{k-1},Y^{k})-(\delta-\epsilon)D_{\psi}(Y^{k-1},Y^{k})\\
			=&\epsilon D_{\psi}(Y^{k-1},Y^{k}),
		\end{align*}
		where the second inequality follows from $1-\delta>0$, the $\sigma$-strong convexity of $\psi$ and $\lambda^{-1}\ge L\ge-\alpha(1-\delta)\sigma$, and the third inequality follows from Assumption \ref{assumption_02}. Then we have
		\begin{align*}
			\sum_{k=1}^{K}D_{\psi}(Y^{k-1},Y^{k})\le \frac{1}{\epsilon}(\Theta^{1}-\Theta^{K+1})\le\frac{1}{\epsilon}\Theta^{1}.
		\end{align*}
		Taking $K\rightarrow+\infty$, we obtain that 
		\[
		\sum_{k=1}^{+\infty}D_{\psi}(Y^{k-1},Y^{k})<+\infty,
		\]
		from which we immediately deduce that the sequence $\{D_{\psi}(Y^{k-1},Y^{k})\}$ converges to zero. 
		
		In addition, we also have 
		\begin{align*}
			K\min_{1\le k\le K}D_{\psi}(Y^{k-1},Y^{k})\le \sum_{k=1}^{K}D_{\psi}(Y^{k-1},Y^{k})\le \frac{1}{\epsilon}\Theta^{1},
		\end{align*}
		which yields the desired result. 
		This completes the proof.
	\end{proof}
	
	The above theorem could only establish the sub-sequential convergence of the AAPB algorithm if the sequence remains in a bounded set.
	We also require the following assumption to analyze the global convergence. It should be emphasized that the local Lipschitz continuity assumption for both $\nabla F(Y, W)$ and $\nabla\psi$ is sufficient in this paper, which is a weaker requirement compared to global Lipschitz continuity.
	\begin{assumption} \label{assumption_03}
		We assume the following two conditions hold:
		\begin{itemize}
			\item[(i)] $\nabla F(Y,W)$  is Lipschitz continuous with
			constant $L_{1} > 0$ on any bounded subset of $\text{dom} F$.
			\item[(ii)] $\nabla\psi$ is Lipschitz continuous with constant $L_{2} > 0$ on any bounded subset of $\text{dom} \psi$.
		\end{itemize}
	\end{assumption}
	
	Based on the above theorem, combined with the K\L~framework \cite{BolteST14}, we demonstrate the whole sequence convergence as follows.
	\begin{theorem}\label{global_convergence}
		(Global convergence of Algorithm \ref{AAPB}) 
		Suppose that Assumptions \ref{assumption_01}, \ref{assumption_02} and \ref{assumption_03} hold, and $0<\eta\le1/L$.  Let $\{Z^{k}:=(Y^{k},W^{k})\}_{k\in\mathbb{N}}$ be a bounded sequence generated by the AAPB algorithm. Assume that the optimization function $\Phi(Y,W)$ is a {semi-algebraic function \cite{attouch2013convergence}} that satisﬁes the K\L~property with exponent $\theta \in [0, 1)$ (see Definition \ref{KL_fun}). Then either the point $Z^{k}$ is a critical point after a ﬁnite number of iterations or the sequence $\{ Z^{k}\}_{k \in\mathbb{N}}$  satisﬁes the ﬁnite length property,
		\[
		\sum_{k=0}^{+\infty}\|Z^{k+1}-Z^{k}\|_{F}<+\infty.
		\] 
	\end{theorem}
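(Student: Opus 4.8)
The plan is to invoke the abstract Kurdyka--{\L}ojasiewicz (K{\L}) convergence machinery of Attouch--Bolte--Svaiter and Bolte--Sabach--Teboulle \cite{attouch2013convergence, BolteST14}, which reduces the finite-length property to three ingredients applied to a single Lyapunov/merit function that is a K{\L} function: a sufficient-decrease inequality, a relative-error (subgradient) bound, and a continuity condition at cluster points. Because the extrapolation step $\bar Y^k = Y^k + \beta_k(Y^k - Y^{k-1})$ carries a one-step memory, the natural state is the augmented triple $\hat Z^k := (Y^k, W^k, Y^{k-1})$, and the natural merit function is exactly the Lyapunov sequence of Theorem~\ref{subsequence_convergence}, namely $\Theta^k = \lambda(\Phi(Y^k,W^k)-\Phi^*) + \delta D_\psi(Y^{k-1},Y^k)$, viewed as $\mathcal E(Y,W,\tilde Y) := \lambda(\Phi(Y,W)-\Phi^*) + \delta D_\psi(\tilde Y, Y)$ evaluated at $\hat Z^k$ (with shorthand $\mathcal E^k := \mathcal E(\hat Z^k)$).

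First I would upgrade the decrease already obtained in Theorem~\ref{subsequence_convergence}. That argument gives $\Theta^k - \Theta^{k+1} \ge (\tfrac{\alpha\lambda}{2}+\tfrac{(1-\delta)\sigma}{2})\|Y^{k+1}-Y^k\|_F^2 + \varepsilon D_\psi(Y^{k-1},Y^k)$, and the $\sigma$-strong convexity of $\psi$ in Assumption~\ref{assumption_01}(i) converts the Bregman term into $\tfrac{\sigma}{2}\|Y^k-Y^{k-1}\|_F^2$, yielding a genuine sufficient decrease $\Theta^k-\Theta^{k+1}\ge c(\|Y^{k+1}-Y^k\|_F^2+\|Y^k-Y^{k-1}\|_F^2)$ with $c>0$. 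To bring in the $W$-block I would use that $F(Y,\cdot)+G$ is strongly convex in $W$ (true for $F=\tfrac12\|W-UV\|_F^2$), so the exact update in \eqref{G_inequality} sharpens to $F(Y^k,W^{k+1})+G(W^{k+1})+\tfrac{\mu}{2}\|W^{k+1}-W^k\|_F^2 \le F(Y^k,W^k)+G(W^k)$, contributing a $\|W^{k+1}-W^k\|_F^2$ term; equivalently, Lipschitz continuity of the $W$-solution map bounds $\|W^{k+1}-W^k\|_F$ by a multiple of $\|Y^k-Y^{k-1}\|_F$.

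Next I would establish the relative-error bound. Combining the stationarity of the $W$-subproblem, $0\in\nabla_W F(Y^k,W^{k+1})+\partial G(W^{k+1})$, with the $Y$-subproblem optimality condition used in the proof of Theorem~\ref{subsequence_convergence}, and inserting the true gradients $\nabla_Y F(Y^{k+1},W^{k+1})$ and $\nabla_W F(Y^{k+1},W^{k+1})$, I would produce $w^{k+1}\in\partial\Phi(Z^{k+1})$ whose norm is controlled, via the local Lipschitz continuity of $\nabla F$ and $\nabla\psi$ in Assumption~\ref{assumption_03} together with $\|Y^{k+1}-\bar Y^k\|_F\le\|Y^{k+1}-Y^k\|_F+\|Y^k-Y^{k-1}\|_F$, by $b(\|Y^{k+1}-Y^k\|_F+\|Y^k-Y^{k-1}\|_F)$; appending the $\tilde Y$-component of $\partial\mathcal E$ preserves this order. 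Boundedness of $\{Z^k\}$ supplies a convergent subsequence, and lower semicontinuity with the decrease yields convergence of the merit values on the limit set, giving the continuity condition. Since $\Phi$ is semi-algebraic (and, for the kernels used, so is $\psi$, hence $D_\psi$ and $\mathcal E$), $\mathcal E$ is a K{\L} function, so the uniformized K{\L} lemma applies on a neighborhood of the nonempty compact limit set.

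Finally, combining the three ingredients in the standard way---concavity of $\phi$, the K{\L} inequality $\phi'(\mathcal E^k-\mathcal E^*)\,d(0,\partial\mathcal E(\hat Z^k))\ge1$, the decrease, and the subgradient bound---produces $d_k^2\le \tfrac{b}{a}\,d_{k-1}\bigl(\phi(\mathcal E^k-\mathcal E^*)-\phi(\mathcal E^{k+1}-\mathcal E^*)\bigr)$ with $d_k:=\|Y^{k+1}-Y^k\|_F+\|Y^k-Y^{k-1}\|_F$; Young's inequality and telescoping then give $\sum_k d_k<\infty$, whence $\sum_k\|Y^{k+1}-Y^k\|_F<\infty$ and, via the $W$-control, $\sum_k\|W^{k+1}-W^k\|_F<\infty$, i.e.\ the finite-length property, and convergence of $\{Z^k\}$ to a critical point. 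The hardest part will be the relative-error step coupled with the $W$-block: producing a legitimate subgradient of the augmented $\mathcal E$ without assuming $\psi\in C^2$ (which I would circumvent by passing to the equivalent bounds $\tfrac{\sigma}{2}\|\cdot\|_F^2\le D_\psi\le\tfrac{L_2}{2}\|\cdot\|_F^2$) and securing finite length of the $W$-iterates, which genuinely relies on strong convexity of $F(Y,\cdot)+G$ in $W$.
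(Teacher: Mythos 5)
You take essentially the same route as the paper, but you go further precisely where the paper is silent. The paper's proof has exactly two ingredients: the relative-error bound---writing the optimality conditions of both subproblems, inserting the true gradients to build $A^{k+1}=(A_{1}^{k+1},A_{2}^{k+1})\in\partial\Phi(Y^{k+1},W^{k+1})$, and bounding $\|A^{k+1}\|_{F}\le (2L_{1}+\frac{L_{2}}{\lambda})\|Y^{k+1}-Y^{k}\|_{F}+\beta_{k}(L_{1}+\frac{L_{2}}{\lambda})\|Y^{k}-Y^{k-1}\|_{F}$ via Assumption \ref{assumption_03}---and then an appeal to the K\L{} framework in which the actual Cauchy-sequence/finite-length argument is explicitly deferred to Theorem 4.5 of \cite{AhookhoshHGP21} and Theorem 4.2 of \cite{HienPGAP22}. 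Your relative-error step is identical to the paper's. What you add---the augmented merit function $\mathcal{E}(Y,W,\tilde Y)=\lambda(\Phi(Y,W)-\Phi^{*})+\delta D_{\psi}(\tilde Y,Y)$ (or its squared-norm surrogate, which is the right way to avoid requiring $\psi\in C^{2}$), the upgrade of the decrease in Theorem \ref{subsequence_convergence} to a genuine sufficient decrease in $\|Y^{k+1}-Y^{k}\|_{F}^{2}+\|Y^{k}-Y^{k-1}\|_{F}^{2}$, and the uniformized K\L{} telescoping---is exactly the deferred content, and it is the standard correct way to handle an extrapolated scheme. Note only that the K\L{}/semi-algebraicity hypothesis must then be imposed on $\mathcal{E}$ rather than on $\Phi$ alone, which you correctly flag as requiring semi-algebraicity of $\psi$ (true for the polynomial kernels used here, but not part of the theorem's hypotheses).

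The one place where your argument and the theorem as stated come apart is the $W$-block, and you have put your finger on a real lacuna. Both the sufficient decrease and the subgradient bound control only $Y$-differences; the exact $W$-step \eqref{G_inequality} carries no proximal term, so nothing in Assumptions \ref{assumption_01}--\ref{assumption_03} yields $\sum_{k}\|W^{k+1}-W^{k}\|_{F}<+\infty$, which the finite-length claim for $Z^{k}=(Y^{k},W^{k})$ requires (indeed, with $F$ independent of $W$ and $G$ having multiple minimizers, the $W$-iterates may oscillate while all assumptions hold). Your fix---strong convexity of $F(Y,\cdot)+G$ in $W$, equivalently Lipschitz continuity of the $W$-solution map---does hold for the NMD instance, where $F=\frac{1}{2}\|W-UV\|_{F}^{2}$ and $G$ is the indicator of the convex set $\{W:\max(0,W)=M\}$, but it is not among the theorem's stated hypotheses, so strictly speaking you prove the theorem under an added assumption. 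Be aware that the paper's own proof shares this gap: the two results it cites concern schemes in which every block update carries a majorization or Bregman proximal term, which the exact $W$-step here does not, so the $W$-part of the finite-length property is nowhere actually established. Making the extra hypothesis explicit (or restricting the finite-length claim to the $Y$-iterates and recovering the $W$-control from the structure of the application) is the honest repair, and yours is the only one of the two arguments that identifies this.
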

	\begin{proof}

		From the optimality condition of $Y$- and $W$-subproblems in Algorithm \ref{AAPB}, we have
		\begin{eqnarray}
			\begin{cases}
				&0\in\partial H(Y^{k+1}) +\nabla_{Y}F(\bar{Y}^{k},W^{k+1}) +\frac{1}{\lambda} (\nabla\psi(Y^{k+1}) - \nabla \psi(\bar{Y}^{k})),\\
				&0\in\nabla_{W} F(Y^{k},W^{k+1}) +\partial G(W^{k+1}).
			\end{cases}
		\end{eqnarray}
		Denote 
		\[
		\begin{aligned}
			A_{1}^{k+1} :=&\nabla_{Y}F(Y^{k+1},W^{k+1})-\nabla_{Y}F(\bar{Y}^{k},W^{k+1})  -\frac{1}{\lambda}(\nabla\psi(Y^{k+1}) - \nabla \psi(\bar{Y}^{k})),\\
			A_{2}^{k+1} := &\nabla_{W}F(Y^{k+1},W^{k+1}) -\nabla_{W}F(Y^{k},W^{k+1}).     
		\end{aligned}
		\]
		We have
		\[
		A_{1}^{k+1} \in \nabla_{Y}F(Y^{k+1},W^{k+1}) +\partial H(Y^{k+1}),\quad A_{2}^{k+1}\in \nabla_{W}F(Y^{k+1},W^{k+1}) +\partial G(W^{k+1}).
		\]
		Consequently, $A^{k+1}=(A_{1}^{k+1},A_{2}^{k+1})\in \partial \Phi(Y^{k+1},W^{k+1})$ and 
		\[
		\begin{aligned}
			&\|A^{k+1}\|_{F}\\
			\le& \|A_{1}^{k+1}\|_{F}+ \|A_{2}^{k+1}\|_{F}\\
			=&\| \nabla_{Y}F(Y^{k+1},W^{k+1})-\nabla_{Y}F(\bar{Y}^{k},W^{k+1})  -\frac{1}{\lambda}(\nabla\psi(Y^{k+1}) - \nabla \psi(\bar{Y}^{k}))\|_{F}\\
			&+ \| \nabla_{W}F(Y^{k+1},W^{k+1})-\nabla_{W}F(Y^{k},W^{k+1})\|_{F}\\
			\le& \| \nabla_{Y}F(Y^{k+1},W^{k+1})-\nabla_{Y}F(\bar{Y}^{k},W^{k+1})\|_{F} + \frac{1}{\lambda}\|\nabla\psi(Y^{k+1}) - \nabla \psi(\bar{Y}^{k})\|_{F}\\
			&+\| \nabla_{W}F(Y^{k+1},W^{k+1})-\nabla_{W}F(Y^{k},W^{k+1})\|_{F}\\
			\le& (L_{1}+\frac{L_{2}}{\lambda})\|Y^{k+1}-\bar{Y}^{k}\|_{F} +L_{1}\|Y^{k+1}-Y^{k}\|_{F}\\
			\le& (2L_{1}+\frac{L_{2}}{\lambda})\|Y^{k+1}-Y^{k}\|_{F}+\beta_{k}(L_{1}+\frac{L_{2}}{\lambda})\|Y^{k}-Y^{k-1}\|_{F},
		\end{aligned}
		\]
		where the third inequality follows from Assumption \ref{assumption_03}, and the last inequality follows from $\bar{Y}^{k}=Y^{k}+\beta_{k}(Y^{k}-Y^{k-1})$ and triangle inequality.

		Furthermore, under the assumption that the function $\Phi(Y, W)$ is a proper, lower semi-continuous, and semi-algebraic function \cite{attouch2013convergence}, it satisfies the K\L~property at every point within its domain  $\text{dom}\Phi$. By incorporating Definition \ref{KL_fun} and Theorem \ref{subsequence_convergence}, we can deduce that the sequence $\{Z^{k}\}$ generated is indeed a Cauchy sequence. The comprehensive proof of this theorem bears resemblance to Theorem 4.5 in \cite{AhookhoshHGP21} and Theorem 4.2 in \cite{HienPGAP22}. Thus the details are omitted for brevity.  
	\end{proof}
	
	It is worth noting that if the fixed step size $\lambda$ in the proposed algorithm (Algorithm \ref{AAPB}) is changed to an adaptive case $\lambda_{k}$, the convergence results of the proposed algorithm still  hold if $\lambda_{k}\le\min\{\lambda_{k-1},1/L\}$.
	
	
	\section{Closed-form solutions} \label{closed-form}
	In this section, we provide a detailed analysis of the closed-form solutions for the $(U,V)$-subproblem under different cases of $H_{1}(U)$ and $H_{2}(V)$.
	
	First, we present the $L$-smad property for the term $F(U,V,W^{k+1})=\frac{1}{2}\|W^{k+1}-UV\|_{F}^{2}$ in the content of matrix decomposition. The following two propositions (Propositions \ref{pro_Lpsi_smad} and \ref{UV_proposition}) are similar to the corresponding results in \cite{MukkamalaO19, mukkamala2022bregman, WangLCH24} with some straightforward modifications. The primary difference between our kernel generating distance and those in \cite{MukkamalaO19, mukkamala2022bregman, WangLCH24} is that theirs is fixed, whereas ours changes with the number of iterations due to the fact that BPG is incorporated in our AAPB framework.  However, the basic proof process remains largely unchanged, so we omit the proof here for simplicity. Interested readers can refer to the details in \cite{MukkamalaO19, mukkamala2022bregman, WangLCH24}.
	
	\begin{proposition} \label{pro_Lpsi_smad}
		At the $k$-th iteration, let $F(U,V,W^{k+1})=\frac{1}{2}\|W^{k+1}-UV\|_{F}^{2}$, $\psi_{1}=\left(\frac{\|U\|^{2}_{F}+\|V\|^{2}_{F}}{2}\right)^{2}$, $\psi_{2}=\frac{\|U\|_{F}^{2}+\|V\|_{F}^{2}}{2}$. Then, for any $L\ge1$, the function $F$ satisfies the $L$-smad property (Definition \ref{L-smad}) with respect to the following kernel generating distance
		\begin{eqnarray}
			\psi(U,V)=3\psi_{1}(U,V)+\|W^{k+1}\|_{F}\psi_{2}(U,V).\label{psi_def_01}
		\end{eqnarray}
	\end{proposition}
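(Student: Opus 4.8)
The plan is to verify the defining inequality \eqref{L_upper} by passing to its second-order (pointwise Hessian) counterpart. First observe that since $\psi$ is convex we have $D_{\psi}\ge 0$, so it suffices to treat $L=1$: once $|F(x)-F(y)-\langle\nabla F(y),x-y\rangle|\le D_{\psi}(x,y)$ is established, the inequality with any $L\ge 1$ follows because $D_{\psi}(x,y)\le L\,D_{\psi}(x,y)$. Next, within the $k$-th subproblem $W^{k+1}$ is a \emph{fixed} matrix (so the iteration-dependence of $\psi$ is irrelevant), and both $F(\cdot,\cdot,W^{k+1})$ and $\psi$ are $C^{2}$ on the whole space $\mathbb{R}^{m\times r}\times\mathbb{R}^{r\times n}$. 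Writing $z=(U,V)$, the integral form of the Taylor remainder gives
\[
D_{F}(z',z)=\int_{0}^{1}(1-t)\,\nabla^{2}F\big(z+t(z'-z)\big)[z'-z,z'-z]\,dt,
\]
and likewise for $D_{\psi}$. Since $\nabla^{2}\psi\succeq 0$, it therefore suffices to prove the pointwise estimate $\big|\nabla^{2}F(z)[d,d]\big|\le \nabla^{2}\psi(z)[d,d]$ for every $z=(U,V)$ and every direction $d=(d_{U},d_{V})$, after which integrating yields $|D_{F}|\le D_{\psi}$.

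I would then compute the two quadratic forms explicitly. Differentiating $F=\tfrac12\|W^{k+1}-UV\|_{F}^{2}$ twice along $d$ gives
\[
\nabla^{2}F(z)[d,d]=\|U d_{V}+d_{U}V\|_{F}^{2}-2\big\langle W^{k+1}-UV,\ d_{U}d_{V}\big\rangle,
\]
while, setting $\rho^{2}:=\|U\|_{F}^{2}+\|V\|_{F}^{2}$, $\mu^{2}:=\|d_{U}\|_{F}^{2}+\|d_{V}\|_{F}^{2}$, and $\langle z,d\rangle:=\langle U,d_{U}\rangle+\langle V,d_{V}\rangle$, the kernel \eqref{psi_def_01} yields
\[
\nabla^{2}\psi(z)[d,d]=3\big(\rho^{2}\mu^{2}+2\langle z,d\rangle^{2}\big)+\|W^{k+1}\|_{F}\,\mu^{2}\ \ge\ \big(3\rho^{2}+\|W^{k+1}\|_{F}\big)\mu^{2}.
\]

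The remaining work is to bound $\nabla^{2}F[d,d]$ by elementary norm inequalities. For the first term, submultiplicativity of $\|\cdot\|_{F}$ and Cauchy--Schwarz give $\|Ud_{V}+d_{U}V\|_{F}\le \|U\|_{F}\|d_{V}\|_{F}+\|V\|_{F}\|d_{U}\|_{F}$, whence $\|Ud_{V}+d_{U}V\|_{F}^{2}\le(\|U\|_{F}^{2}+\|V\|_{F}^{2})(\|d_{V}\|_{F}^{2}+\|d_{U}\|_{F}^{2})=\rho^{2}\mu^{2}$. For the residual term, $|2\langle W^{k+1}-UV,d_{U}d_{V}\rangle|\le 2\|W^{k+1}-UV\|_{F}\|d_{U}\|_{F}\|d_{V}\|_{F}\le\|W^{k+1}-UV\|_{F}\,\mu^{2}$, and by the triangle inequality together with $\|UV\|_{F}\le\|U\|_{F}\|V\|_{F}\le\tfrac12\rho^{2}$ we get $\|W^{k+1}-UV\|_{F}\le\|W^{k+1}\|_{F}+\tfrac12\rho^{2}$. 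Combining,
\[
\big|\nabla^{2}F(z)[d,d]\big|\le\Big(\tfrac32\rho^{2}+\|W^{k+1}\|_{F}\Big)\mu^{2}\le\big(3\rho^{2}+\|W^{k+1}\|_{F}\big)\mu^{2}\le\nabla^{2}\psi(z)[d,d],
\]
which is exactly the desired pointwise bound, completing the argument.

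The only genuinely delicate point is the bounding of the indefinite residual term $-2\langle W^{k+1}-UV,d_{U}d_{V}\rangle$: it has no fixed sign, so one must control its absolute value, and this is precisely where the two coefficients in \eqref{psi_def_01} are needed. The coefficient $3$ on $\psi_{1}$ absorbs the quartic growth in $\rho^{2}$ (it dominates $\tfrac32\rho^{2}$), while the weight $\|W^{k+1}\|_{F}$ on $\psi_{2}$ absorbs the linear-in-$W^{k+1}$ part; examining the two extremes $\rho\to\infty$ and $\rho=0$ shows the binding requirement is the $\|W^{k+1}\|_{F}$-term, which forces $L\ge 1$ and explains the range stated in the proposition. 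Everything else is routine, so the remaining effort is purely the careful norm estimation just outlined.
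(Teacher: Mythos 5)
Your proof is correct: the Hessian computations, the Cauchy--Schwarz and AM--GM estimates, and the integral-remainder argument all check out, and the reduction to $L=1$ via $D_\psi\ge 0$ is valid. This is essentially the same route as the paper's: the paper omits its proof and defers to \cite{MukkamalaO19, mukkamala2022bregman, WangLCH24}, whose arguments establish the $L$-smad property precisely by this second-order characterization, i.e., bounding $\pm\nabla^{2}F$ by $\nabla^{2}\psi$ pointwise, so your write-up simply supplies the details the paper chose to skip.
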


	Now, we discuss the closed-form solution for $(U,V)$ in the NMD-AAPB algorithm (i.e., \eqref{update_UV_02} in Algorithm \ref{NMD_AAPB}) for several cases of $H_{1}(U)$ and $H_{2}(V)$. 
	The Bregman distances for $(U,V)$-subproblem under different cases of $H_{1}(U)+H_{2}(V)$ are given in Table \ref{Bregman_diff}. Then the corresponding closed-form solutions are given in the following proposition. 
	\begin{table*}[!ht]
		\fontsize{8}{15}\selectfont
		\caption{The Bregman distances for $(U,V)$-subproblem under different cases of $H_{1}(U)+H_{2}(V)$ at the $k$-th iteration.}
		\label{Bregman_diff}
		\centering
		\begin{tabular}{c|c|c}
			\hline
			\text{Case}&$H_{1}(U)+H_{2}(V)$  &  $ \psi(U,V)$ \\ \hline
			(i)& $0+0$ & $3\psi_{1}(U,V)+\|W^{k+1}\|_{F}\psi_{2}(U,V)$\\\hline
			(ii)& $\frac{\eta_{1}}{2}\|U\|_{F}^{2}+\frac{\eta_{2}}{2}\|V\|_{F}^{2}$ & $3\psi_{1}(U,V)+\|W^{k+1}\|_{F}\psi_{2}(U,V)$\\\hline
			(iii)& $\eta_{1}\|U\|_{1}+\eta_{2} \|V\|_{1}$ & $3\psi_{1}(U,V)+\|W^{k+1}\|_{F}\psi_{2}(U,V)$\\\hline
			(iv)&$\frac{\mu_{0}}{2}\text{Tr}(U^{T}LU)+\frac{\eta_{1}}{2}\|U\|_{F}^{2}+\frac{\eta_{2}}{2}\|V\|_{F}^{2}$   & $3\psi_{1}(U,V)+(\|W^{k+1}\|_{F}+\mu_{0}\|L\|_{F})\psi_{2}(U,V)$ \\ \hline 
			(v)&$\eta_1\|U\|_1-\frac{\eta_2}{2}\|U\|_F^2$ & $3\psi_{1}(U,V)+\|W^{k+1}\|_{F}\psi_{2}(U,V)+\frac{\eta_{2}\lambda}{2}\|U\|_{F}^{2}$  \\ \hline
			(vi)&$I_{\|U_{,:}\|_0\le s_1}+I_{\|V_{,:}\|_0\le s_2}$  & $3\psi_{1}(U,V)+\|W^{k+1}\|_{F}\psi_{2}(U,V)$ \\ \hline
			
		\end{tabular}
	\end{table*}

	\begin{proposition} \label{UV_proposition}
		In this proposition, we provide the closed-form solutions of $(U, V)$-subproblem as follows.
		\begin{itemize}
			\item[(i)] $\mathbf{H_{1}(U)=H_{2}(V)=0}$. At the $k$-th iteration,  the closed-form solutions of $(U^{k+1},V^{k+1})$ are given by $U^{k+1}=-tP^{k}, V^{k+1}=-tQ^{k}$ respectively, where $t$ is the non-negative real root of
			\[
			3\left(\|P^{k}\|^{2}_{F}+\|Q^{k}\|_{F}^{2}\right) t^{3}+\|W^{k+1}\|_{F}t-1=0.
			\]
			\item[(ii)] $\mathbf{H_{1}(U)=\frac{\eta_{1}}{2}\|U\|_{F}^{2}}, \mathbf{H_{2}(V)=\frac{\eta_{2}}{2}\|V\|_{F}^{2}}$. At the $k$-th iteration, the closed-form solutions $(U^{k+1},V^{k+1})$ are given by $U^{k+1}=-t_1P^{k}, V^{k+1}=-t_2Q^{k}$ respectively, where $t_1$ and $t_2$ are the  non-negative real root of 
			\[
			3\left(\|P^{k}\|_{F}^{2}+\|Q^{k}\|_{F}^{2}\right) t^{3} +(\|W^{k+1}\|_{F}+\eta_{1})t-1=0
			\]
			and 
			\[
			3\left(\|P^{k}\|_{F}^{2}+\|Q^{k}\|_{F}^{2}\right) t^{3} +(\|W^{k+1}\|_{F}+\eta_{2})t-1=0,
			\]
			respectively. 
			\item[(iii)] $\mathbf{H_{1}(U)=\eta_{1}\|U\|_{1}}, \mathbf{H_{2}(V)=\eta_{2} \|V\|_{1}}$. 
			At the $k$-th iteration,  the update step \eqref{update_UV_02} is given by  $U^{k+1}=t\mathcal{S}_{\eta_{1}\lambda}(-P^{k}), V^{k+1}=t\mathcal{S}_{\eta_{2}\lambda}(-Q^{k})$ respectively,  where $t\ge0$ and satisfies 
			\[
			\begin{aligned}
				3(\|\mathcal{S}_{\eta_{1}\lambda}(-P^{k})\|_{F}^{2}+&\|\mathcal{S}_{\eta_{2}\lambda}(-Q^{k})\|_{F}^{2})t^{3}+\|W^{k+1}\|_{F}t-1=0,
			\end{aligned}
			\]
			where $\mathcal{S}_{\lambda}(\cdot)$ is the soft-thresholding operator, see Definition \ref{soft_def} for details.
			\item[(iv)] $\mathbf{H_{1}(\cdot)=\frac{\mu_{0}}{2}\text{Tr}(U^{T}LU)+\frac{\eta_{1}}{2}\|U\|_{F}^{2}}$, $\mathbf{H_{2}(\cdot)=\frac{\eta_{2}}{2} \|V\|_{F}^{2}}$. 
			At the $k$-th iteration, the update step of $(U,V)$ is given by $U^{k+1}=-t_{1}P^{k}$, $V^{k+1}=-t_{2}Q^{k}$ respectively, where $t_{1},t_{2}\ge0$ and satisfies
			\[
			\begin{aligned}
				3(\|P^{k}\|_{F}^{2}+\|Q^{k}\|_{F}^{2})t^{3}+&(\|W^{k+1}\|_{F}+\mu_{0}\|L\|_{F}+\eta_{1})t-1=0
			\end{aligned}
			\]
			and 
			\[
			\begin{aligned}
				3(\|P^{k}\|_{F}^{2}+\|Q^{k}\|_{F}^{2})t^{3}+&(\|W^{k+1}\|_{F}+\mu_{0}\|L\|_{F}+\eta_{2})t-1=0,
			\end{aligned}
			\]
			respectively.
			\item[(v)] $\mathbf{H_{1}(U)=\eta_{1}\|U\|_{1}-\frac{\eta_{2}}{2}\|U\|_{F}^{2}}$, $\mathbf{H_{2}(V)=0}$. At the $k$-th iteration, the update step \eqref{update_UV_02} is given by $U^{k+1}=t\mathcal{S}_{\eta_{1}\lambda}(-P^{k})$, $V^{k+1}=-tQ^{k}$ respectively, where $t\ge0$ and satisfies
			\[
			\begin{aligned}
				3(\|\mathcal{S}_{\eta_{1}\lambda}(-P^{k})\|_{F}^{2}&+\|-Q_{k}\|_{F}^{2}) t^{3}+\|W^{k+1}\|_{F}t-1=0.
			\end{aligned}
			\]
			\item[(vi)] $\mathbf{H_{1}(U)=I_{\|U\|_{0}\le s_{1}}}$, $\mathbf{H_{2}(V)=I_{\|V\|_{0}\le s_{2}}}$. 
			At the $k$-th iteration,  the update step \eqref{update_UV_02} is given by $U^{k+1}=t\mathcal{H}_{s_{1}}(-P^{k}), V^{k+1}=t\mathcal{H}_{s_{2}}(-Q^{k})$ respectively,  where $t\ge0$ and satisfies
			\[
			\begin{aligned}
				3(\|\mathcal{H}_{s_{1}}(-Q^{k})\|_{F}^{2}+&\|\mathcal{H}_{s_{2}}(-Q^{k})\|_{F}^{2})t^{3}+\|W^{k+1}\|_{F}t-1=0,
			\end{aligned}
			\]
			where $\mathcal{H}_{s}(\cdot)$ is the hard-thresholding operator, see Definition \ref{hard_def} for details.
		\end{itemize}  
	\end{proposition}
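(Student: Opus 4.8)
The plan is to handle all six cases through a single mechanism, exploiting the fact that the kernel $\psi$ in Table~\ref{Bregman_diff} depends on $(U,V)$ essentially only through the scalar $\rho:=\|U\|_F^2+\|V\|_F^2$. First I would write the subproblem \eqref{update_UV_02} as minimizing $\lambda H_1(U)+\lambda H_2(V)+\langle P^k,U\rangle+\langle Q^k,V\rangle+\psi(U,V)$ and compute $\nabla_U\bigl(3\psi_1+\|W^{k+1}\|_F\psi_2\bigr)=(3\rho+\|W^{k+1}\|_F)U=:cU$, and likewise $\nabla_V(\cdot)=cV$ with the \emph{same} scalar $c$. The crucial structural observation is that the only coupling between $U$ and $V$ enters through this one scalar $c$; once $c$ is frozen, the objective separates into an independent $U$-problem and $V$-problem.

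The second step is, for fixed $c>0$, to recognize each separated problem as a scaled proximal step: the $U$-problem becomes $\min_U \lambda H_1(U)+\langle P^k,U\rangle+\tfrac{c}{2}\|U\|_F^2$, whose minimizer is $\operatorname{prox}_{(\lambda/c)H_1}(-P^k/c)$ in the sense of Definition~\ref{proximal-operator}. I would then invoke the positive-homogeneity identities $\mathcal{S}_{\tau/c}(x/c)=c^{-1}\mathcal{S}_\tau(x)$ and $\mathcal{H}_s(x/c)=c^{-1}\mathcal{H}_s(x)$ (Definitions~\ref{soft_def} and~\ref{hard_def}) to pull out the factor $t:=1/c$, yielding $U=t\,\mathcal{T}^{(1)}(-P^k)$ with $\mathcal{T}^{(1)}$ equal to the identity for the smooth quadratic cases (i), (ii), (iv), the soft-threshold $\mathcal{S}_{\eta_1\lambda}$ for the $\ell_1$ cases (iii), (v), and the hard-threshold $\mathcal{H}_{s_1}$ for the $\ell_0$ case (vi); $V$ is treated identically. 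For case (v) I would first note that the extra $\tfrac{\eta_2\lambda}{2}\|U\|_F^2$ placed into $\psi$ exactly cancels the concave $-\tfrac{\eta_2}{2}\|U\|_F^2$ hidden in $\lambda H_1$, restoring a convex radial $U$-problem; for case (iv) the non-radial Laplacian $\tfrac{\mu_0}{2}\operatorname{Tr}(U^{T}LU)$ is instead folded into the smooth part and linearized, so it appears inside $P^k$ and forces the $\psi_2$-coefficient to grow to $\|W^{k+1}\|_F+\mu_0\|L\|_F$ in order to preserve the $L$-smad property of Proposition~\ref{pro_Lpsi_smad}, leaving a radial quadratic $U$-problem.

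The third step closes the loop: substituting $U=t\,\mathcal{T}^{(1)}(-P^k)$ and $V=t\,\mathcal{T}^{(2)}(-Q^k)$ into $c=3\rho+\|W^{k+1}\|_F$ with $\rho=t^2 N$ and $N:=\|\mathcal{T}^{(1)}(-P^k)\|_F^2+\|\mathcal{T}^{(2)}(-Q^k)\|_F^2$, and using $c=1/t$, produces the scalar equation $3N t^3+\|W^{k+1}\|_F t-1=0$, which is exactly the cubic displayed in each case (with $\|W^{k+1}\|_F$ replaced by its case-dependent shift). Existence and uniqueness of the claimed nonnegative root follow at once: $g(t)=3Nt^3+\|W^{k+1}\|_F t-1$ has $g(0)=-1<0$, is strictly increasing on $[0,\infty)$ since $N\ge0$ and $\|W^{k+1}\|_F\ge0$, and tends to $+\infty$, so it admits a single root in $(0,\infty)$. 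Finally, for the convex cases (i)--(v) the subproblem is (strongly) convex once the case-(v) correction is in place, so the stationarity conditions I used are sufficient and pin down the global minimizer; for the nonconvex $\ell_0$ case (vi) the frozen-$c$ $U$-problem is an exact Euclidean projection onto the $\ell_0$-ball realized by $\mathcal{H}_{s_1}$, so the reduction still returns the global solution of the inner problem.

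I expect the main obstacle to be the coupling: because $c$ depends on $\rho=\|U\|_F^2+\|V\|_F^2$, the $U$- and $V$-updates are not genuinely independent, and one must argue that a single consistent scalar $t$ simultaneously solves both stationarity conditions---this is precisely what the scalar cubic encodes. When the two regularization weights differ, as in (ii) and (iv), keeping track of the per-variable scalings $t_1,t_2$ while maintaining one shared $c$ is the delicate bookkeeping step. A secondary subtlety is justifying, in case (vi), that the global minimizer of the nonconvex inner problem is attained at the hard-thresholded point and not merely a stationary point; this rests on the projection interpretation being exact for fixed $c$ together with the scale-invariance of the support selected by $\mathcal{H}_s$.
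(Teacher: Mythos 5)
Your underlying mechanism --- observing that the kernels in Table~\ref{Bregman_diff} are radial, so that $\nabla_U\psi=cU$ and $\nabla_V\psi=cV$ with the single scalar $c=3(\|U\|_F^2+\|V\|_F^2)+\|W^{k+1}\|_F$, freezing $c$ to get a separable prox problem, pulling out $t=1/c$ via the homogeneity identities $\mathcal{S}_{\tau/c}(x/c)=c^{-1}\mathcal{S}_{\tau}(x)$ and $\mathcal{H}_{s}(x/c)=c^{-1}\mathcal{H}_{s}(x)$, and closing with a scalar consistency cubic --- is exactly the route of the references \cite{MukkamalaO19, mukkamala2022bregman, WangLCH24} to which the paper defers (the paper prints no proof of Proposition~\ref{UV_proposition}). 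For cases (i), (iii) and (v) your argument is correct and complete, and in case (vi) you even silently repair the paper's typo ($\mathcal{H}_{s_1}(-Q^k)$ should read $\mathcal{H}_{s_1}(-P^k)$).

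The genuine gap is in cases (ii) and (iv). Carried out honestly, your own stationarity conditions give
\begin{equation*}
U^{k+1}=-\frac{P^k}{c+\lambda\eta_1},\qquad V^{k+1}=-\frac{Q^k}{c+\lambda\eta_2},\qquad c=3\bigl(t_1^2\|P^k\|_F^2+t_2^2\|Q^k\|_F^2\bigr)+\|W^{k+1}\|_F,
\end{equation*}
with $t_i=1/(c+\lambda\eta_i)$ (and, for (iv), $\|W^{k+1}\|_F$ replaced by $\|W^{k+1}\|_F+\mu_0\|L\|_F$). This is a \emph{coupled} system: the cubic term seen by $t_1$ involves $t_2$ and vice versa, and it collapses to the two independent cubics $3(\|P^k\|_F^2+\|Q^k\|_F^2)t_i^3+(\|W^{k+1}\|_F+\eta_i)t_i-1=0$ displayed in the proposition only when $\eta_1=\eta_2$, which forces $t_1=t_2$. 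Note also that your derivation necessarily produces $\lambda\eta_i$, not $\eta_i$, in the linear coefficient --- consistent with the threshold $\eta_1\lambda$ you correctly obtain in case (iii), but not with the displayed cubics. So your assertion that the substitution ``produces exactly the cubic displayed in each case'' fails at precisely the step you yourself flagged as ``delicate bookkeeping'': one shared $c$ cannot be reconciled with two decoupled cubics. What your method actually proves for (ii)/(iv) is that $(t_1,t_2)$ solves the coupled system above (equivalently, one monotone scalar equation in $c$); you should either have derived that and recorded the discrepancy with the statement, or restricted these cases to $\eta_1=\eta_2$.

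A secondary, fixable gap is your justification of case (vi): for a nonconvex problem, stationarity plus ``the frozen-$c$ problem is an exact projection'' does not by itself show that the constructed point globally minimizes \eqref{update_UV_02}. The clean patch is a tight-minorant argument: write $\psi(U,V)=\Phi(\rho)$ with $\rho=\|U\|_F^2+\|V\|_F^2$ and $\Phi(\rho)=\tfrac{3}{4}\rho^2+\tfrac{1}{2}\|W^{k+1}\|_F\,\rho$; convexity of $\Phi$ gives $\psi(U,V)\ge\psi(\hat U,\hat V)+\tfrac{\hat c}{2}(\rho-\hat\rho)$ with equality at the candidate $(\hat U,\hat V)$, where $\hat c=\Phi'(\hat\rho)\cdot 2=3\hat\rho+\|W^{k+1}\|_F$. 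Hence the frozen-$\hat c$ objective, up to an additive constant, minorizes the true objective and touches it at the candidate; since hard-thresholding solves the frozen problem globally, the candidate is a global minimizer. This argument also subsumes your convex cases and is the tidiest way to finish the proof.
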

	\begin{remark}
		The $t$-subproblem in the above proposition requires solving a one-dimensional cubic equation as follows,
		\[at^3+ct-1=0,\]
		where $a,c>0$. The unique positive solution \cite{Fan89} would be $t=\frac{-\sqrt[3]{\alpha_{1}}-\sqrt[3]{\alpha_{2}}}{3a}$ with $\alpha_{1}=3a(\frac{-9a+\sqrt{81a^{2}+12ac^{3}}}{2})$ and $\alpha_{2}=3a(\frac{-9a-\sqrt{81a^{2}+12ac^{3}}}{2})$, respectively.
	\end{remark}
	
	
	\section{Numerical experiments}\label{numerical_experiments}
	This section presents numerical experiments evaluating the real-world performance of our proposed algorithm under varying regularization strengths. Experiments are implemented in MATLAB   using a system equipped with a 3.4 GHz Intel Core i7-14700KF processor and 64 GB RAM. 
	
	The algorithm terminates if one of the following three conditions is met:
	\begin{itemize}
		\item[(a)] The maximum run time ($\max_{T}$) is reached. 
		\item[(b)] The maximum number of iterations ($\max_{K}$) is reached.
		\item[(c)] All algorithms for ReLU-based models of the synthetic dataset terminate when
		\[
		\text{Tol}:=\frac{\|M-\max(0,UV)\|_{F}}{\|M\|_{F}}\le 10^{-4}. 
		\]
		All algorithms for ReLU-based models of the real dataset will use the following quantity
		\[
		\text{Tol}:=\frac{\|M-\max(0,X)\|_{F}}{\|M\|_{F}}\le\text{e}_{\min},
		\]
		where $X$ is the solution, and $\text{e}_{\min}$ is the smallest relative error obtained by all algorithms within the allotted time.
	\end{itemize}
	For simplicity, we set the extrapolation parameter\footnote{Here, $\beta_k=0.6$ is taken simply to meet the conditions in Remark \ref{beta_remark}. A larger value or an adaptive approach can be employed, as numerical experiments demonstrate that convergence is still guaranteed and potentially improved numerical results can be obtained.} $\beta_{k}=0.6$ for the NMD-AAPB algorithm, and set the step size $\lambda\le1/L$ with $L\ge1$ from Proposition \ref{pro_Lpsi_smad} for all the proposed algorithms.  This section presents the analysis of the regularization methods associated with cases (iii), (iv), and (vi) in Table \ref{Bregman_diff}, with each case being investigated separately.  
	
	\subsection{Graph regularized matrix factorization}
	Extensive research in the literature \cite{Cai11GNMF, ShahnazBPP06, AhmedHAD21}  has demonstrated the effectiveness of matrix decomposition techniques for clustering problems, particularly in document and image clustering applications. However, a critical limitation of these existing approaches lies in their reliance on strict nonnegativity constraints $U, V\ge 0$, which may excessively restrict the solution space compared to the more flexible constraint  $UV\ge 0$. This stringent requirement often leads to suboptimal low-rank approximations. To address this limitation, we propose to relax the non-negative constraints and consider the following optimization problem: 
	\begin{eqnarray}
		\begin{aligned}
			\underset{U,V,W}{\min}\, &\ \frac{1}{2}\|W-UV\|_{F}^{2}+\frac{\mu_{0}}{2}\text{Tr}(U^{T}\bar{L}U)+\frac{\eta_{1}}{2}\|U\|_{F}^{2}+\frac{\eta_{2}}{2} \|V\|_{F}^{2},\\
			\text{s.t.}\,\,\, &\ \max(0,W)=M. 
		\end{aligned} \label{GMF_model}
	\end{eqnarray}
	where $\text{Tr}(U^{T}\bar{L}U)$ is the graph-regularization term, $\text{Tr}(\cdot)$ denotes the trace of a matrix, $\bar{L}$ denotes the graph Laplacian matrix,  and $\mu_{0}$, $\eta_{1}$ and $\eta_{2}$ are three positive parameters. We use three datasets\footnote{\url{http://www.cad.zju.edu.cn/home/dengcai/Data/data.html}} \emph{COIL20}, \emph{PIE}, and \emph{TDT2}  (see Table \ref{real_lgl2_details} for details) to illustrate the numerical performance of the proposed algorithm. In this numerical experiment, we let $\mu_{0}=100$ , and $r=20$ for \emph{COIL20}, $r=68$  for \emph{PIE}, $r=100$ for \emph{COIL100}, and $r=30$ for \emph{TDT2}, respectively. 
	
	\begin{table}[h!]
		\begin{center}
			\caption{The details for three datasets. ``sparsity ($\%$)'' stands for the proportion of non-zero elements in the data.}  \label{real_lgl2_details}
			\begin{tabular}{c|c c c c} 
				\hline
				Data & $m$ & $n$ & $r$&sparsity ($\%$) \\\hline
				\emph{PIE} & 2856 &1024 & 68 & 91.47\\
				\emph{COIL20} & 1440 & 1024 &20& 65.61\\
				\emph{TDT2} & 9394 & 36771 & 30 &0.35\\\hline
			\end{tabular}
		\end{center}
	\end{table}
	We compute the clustering labels by applying the $K$-means method\footnote{\url{http://www.cad.zju.edu.cn/home/dengcai/Data/Clustering.html}} to the matrix $U$. The clustering accuracy, defined as the proportion of correctly predicted class labels between the obtained labels and the ground truth labels, is computed following the methodology described in \cite{CaiHH05}. 
	Additionally, we conduct a comprehensive comparison between our proposed ReLU-based graph matrix decomposition model \eqref{GMF_model} and the well-established graph-regularized NMF framework \cite{Cai11GNMF}. Specifically, 
	\begin{eqnarray}
		\begin{aligned}
			\min_{U\in\mathbb{R}^{m\times r}_{+},V\in\mathbb{R}^{r\times n}_{+}}\,\frac{1}{2}\|M-UV\|_{F}^{2}&+\frac{\mu_{0}}{2}\text{Tr}(U^{T}LU)+\frac{\eta_{1}}{2}\|U\|_{F}^{2}+\frac{\eta_{2}}{2} \|V\|_{F}^{2}.
		\end{aligned}\label{GNMF_model}
	\end{eqnarray}
	We employ two advanced optimization approaches: the BPG method \cite{BolteSTV18First} and its enhanced variant, BPG with extrapolation (BPGE) \cite{MukkamalaOPS20, WangLCH24} to address this optimization problem. 
	
	For both clustering models, we set the parameters as follows:  $\mu_{0}=100$, $\eta_{1}=\eta_{2}=0.1$. To maintain consistency across all datasets, we implemented a uniform stopping criterion of  $\text{max}_{k}=100$ for all algorithms.  The clustering performance was evaluated using accuracy metrics, with detailed methodological references available in \cite{CaiHH05}.  
	The comparative results of all algorithms across three datasets are systematically presented in Table  \ref{GNMG_clustering}.  This table provides a clear overview of each algorithm's performance in the clustering tasks. 
	\begin{table}[h!]
		\begin{center}
			\caption{Comparison of clustering accuracy $(\%)$ on three datasets for models \eqref{GMF_model} and \eqref{GNMF_model} under compared algorithms. }   \label{GNMG_clustering}
			\begin{tabular}{c|c | c  c |  c c} 
				\hline
				\multirow{2}*{Data} & & \multicolumn{2}{c|}{model \eqref{GNMF_model}} & \multicolumn{2}{c}{model \eqref{GMF_model}}\\
				&K-means& BPG& BPGE & NMD-APB &NMD-AAPB \\\hline
				\emph{PIE} & 54.41& 84.09& 88.78&84.92& \textbf{88.80}\\
				\emph{COIL20} &73.86& 86.74& 87.62&87.60& \textbf{89.04}\\
				\emph{TDT2}& 67.43 & 75.43& 81.78 &77.69& \textbf{84.24}\\\hline
			\end{tabular}
		\end{center}
	\end{table}
	
	As evidenced by the  results presented in Table \ref{GNMG_clustering}, both the proposed ReLU-based model \eqref{GMF_model} and the classical graph-regularized NMF approach \eqref{GNMF_model} significantly outperform the direct application of the K-means algorithm to the raw data matrix $M$. Furthermore, our proposed ReLU-based model \eqref{GMF_model} demonstrates superior clustering performance over the classical graph-regularized NMF approach \eqref{GNMF_model}. This empirical validation not only confirms the effectiveness of our proposed model but also highlights the computational efficiency of the accompanying algorithm. Notably, the accelerated version of our algorithm consistently outperforms its non-accelerated counterpart across all experimental scenarios. 
	
	A particularly interesting observation from Table \ref{GNMG_clustering} reveals a strong correlation between data sparsity and performance improvement: the sparser the dataset, the more significant the performance gains achieved by our ReLU-based decomposition model compared to traditional approaches. This trend suggests that the ReLU function's inherent properties are particularly well-suited for handling sparse data structures.
	
	Based on these comprehensive numerical results, we conclude that for applications involving non-negative sparse datasets, the ReLU-based optimization model represents a promising alternative that can potentially deliver enhanced numerical performance. This finding opens new possibilities for applying our approach to various real-world scenarios where sparse data representations are prevalent. 
	
	\subsection{Compression of sparse NMF Basis}
	We now investigate an additional application of ReLU-NMD: the compression of sparse non-negative dictionaries  \cite{LeeS99}. To demonstrate this application, we employ two standard facial image datasets \footnote{http://www.cad.zju.edu.cn/home/dengcai/Data/FaceData.html}: the \emph{ORL} dataset with   $m=4096$ features and $n=400$ samples, and the \emph{YaleB} dataset with   $m=1024$ features and $n=2414$ samples. Through the NMF decomposition framework $M \approx \tilde{U}\tilde{V}$,  where both factor matrices $\tilde{U}$ and $\tilde{V}$ maintain non-negativity constraints, we can effectively extract sparse facial features that are represented by the columns of the basis matrix $\tilde{U}$.

	\begin{figure}[!ht]
		\setlength\tabcolsep{2pt}
		\centering
		\begin{tabular}{c}
			\includegraphics[width=0.95\columnwidth]{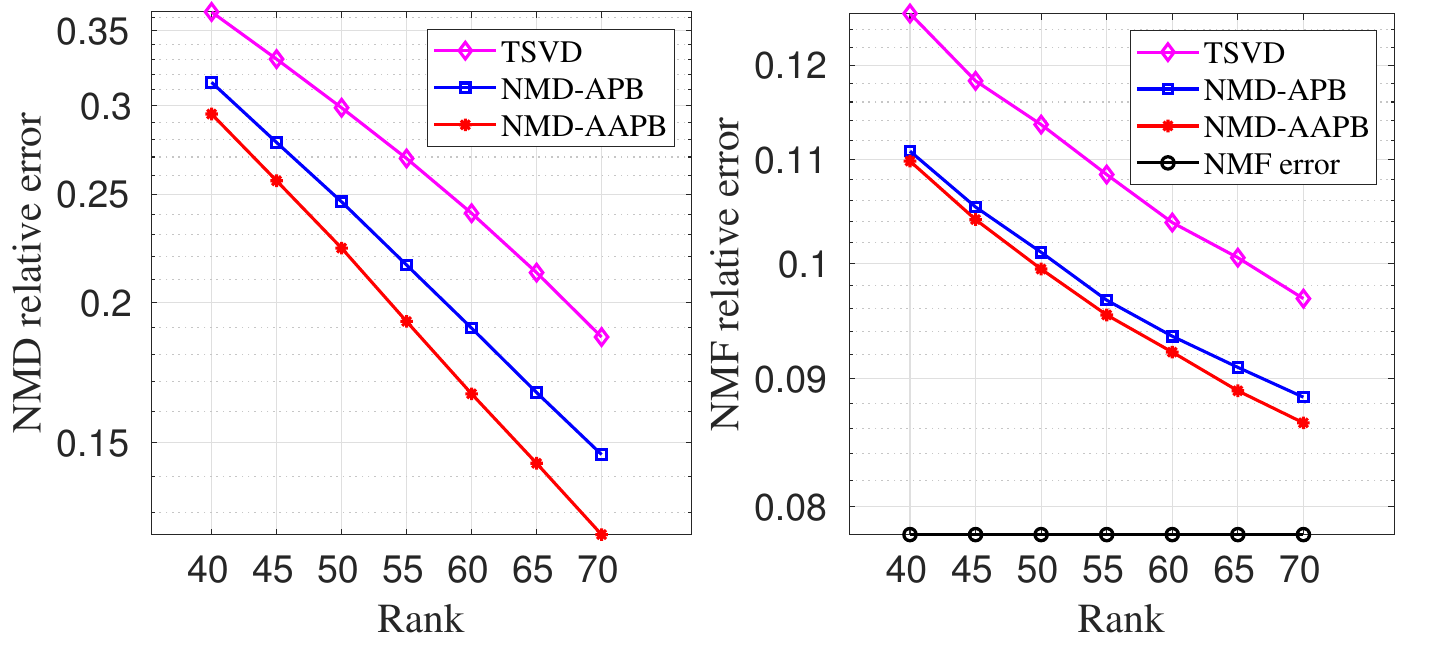}\\
			(a) \emph{ORL}. Left: Error on $U$. Right: Error on $M$.\\
			\includegraphics[width=0.95\columnwidth]{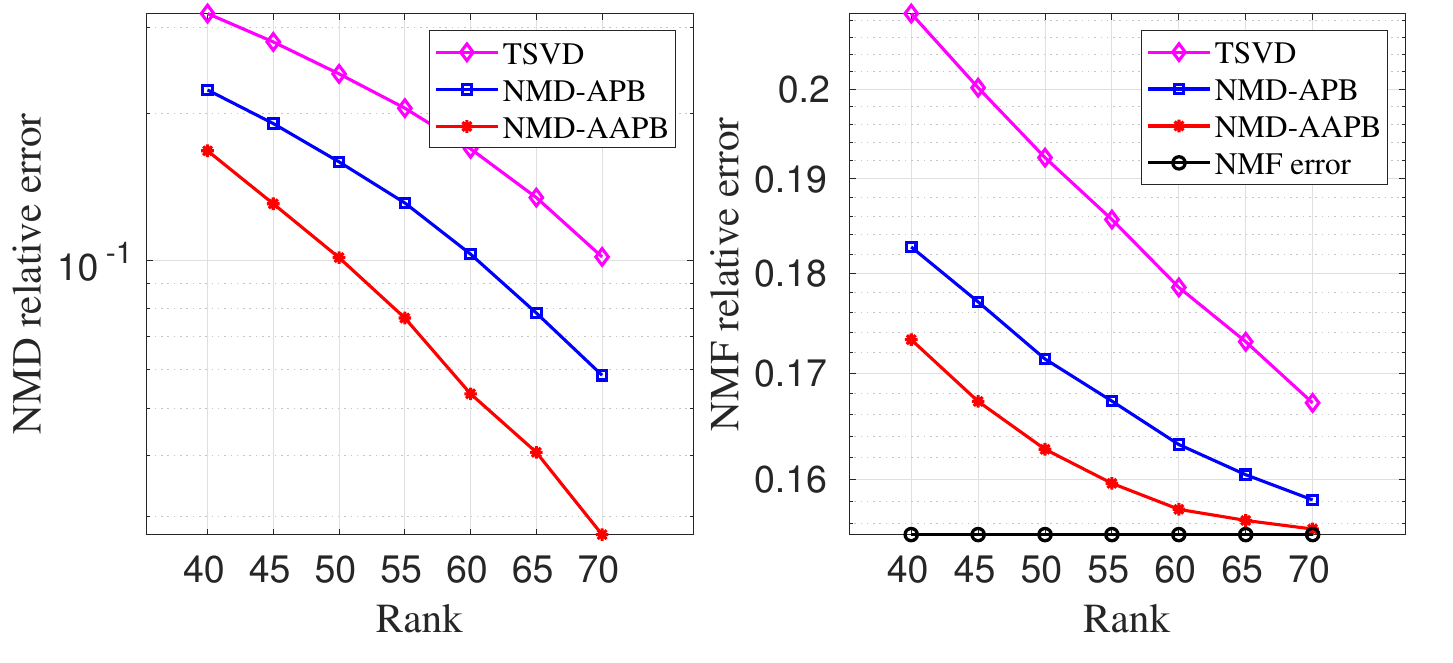}\\
			(b) \emph{YaleB}. Left: Error on $U$. Right: Error on $M$.\\
		\end{tabular}
		\caption{Numerical experiments of real-world datasets for solving \eqref{NMF_com}. } 
		\label{real_l0_com}
	\end{figure}
	
	\begin{figure}
		\setlength\tabcolsep{3pt}
		\centering
		\begin{tabular}{ccccccc}
			\includegraphics[width=0.44\textwidth]{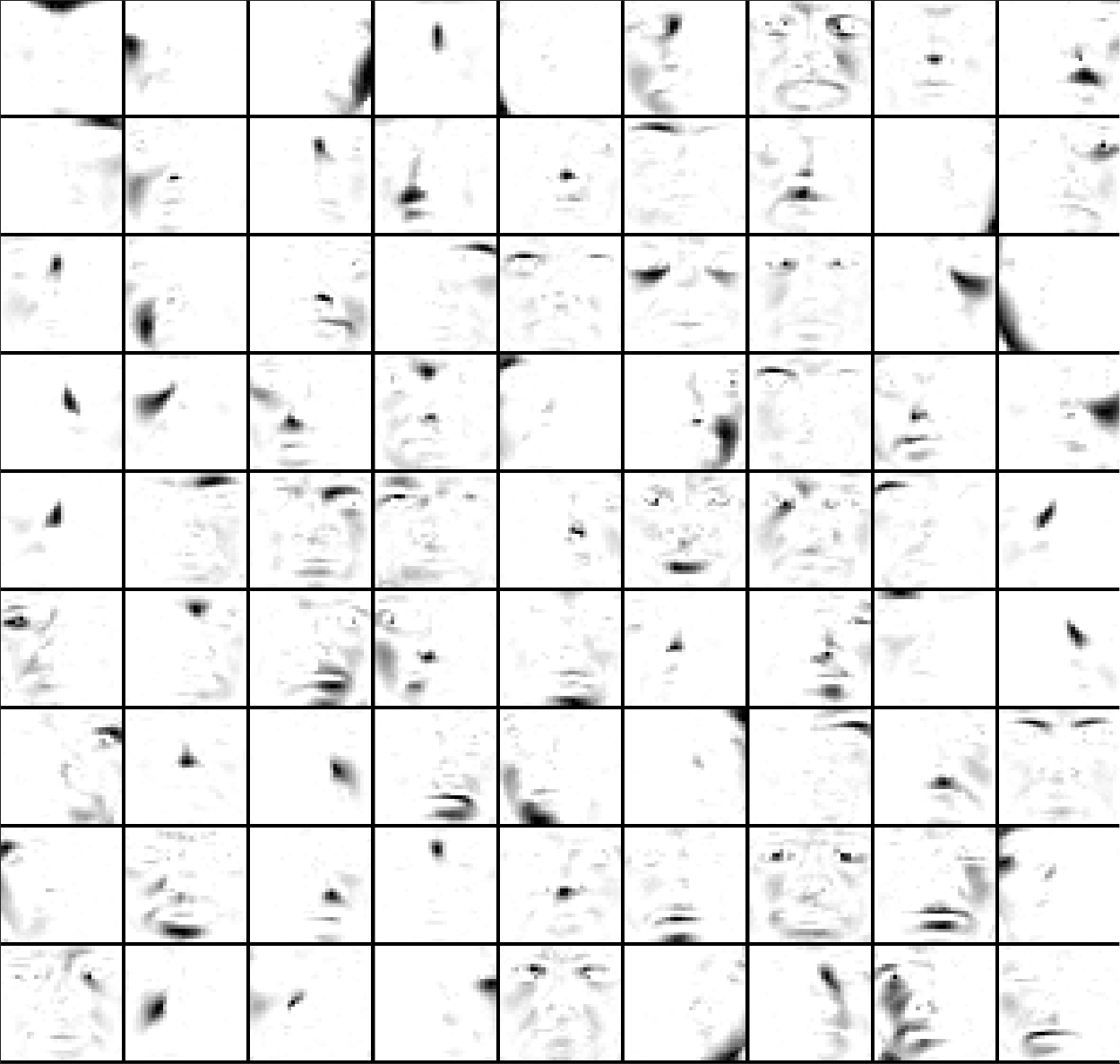}&
			\includegraphics[width=0.44\textwidth]{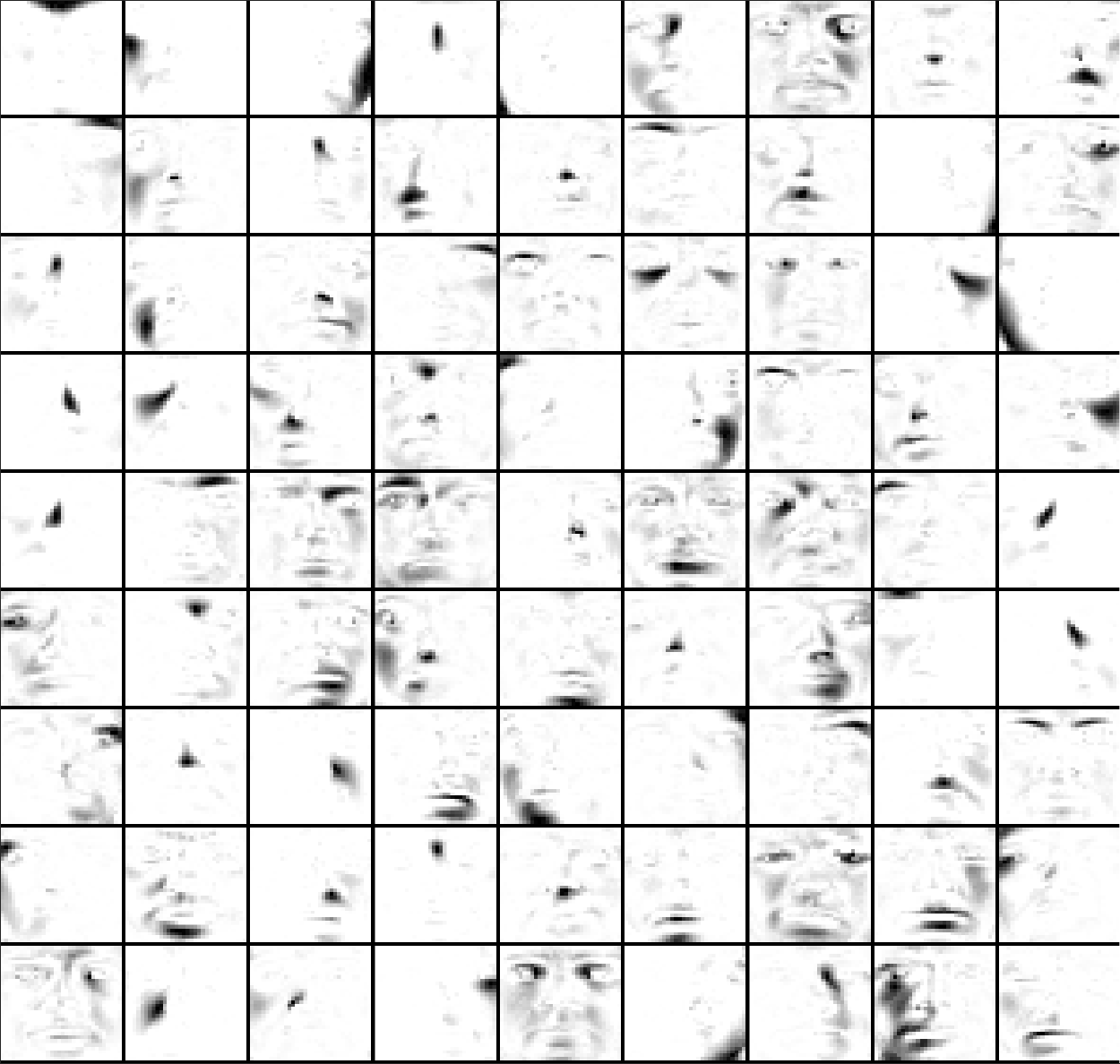}\\
			(a) Original $\tilde{U}$, $r=81$. &(b) TSVD, {$\text{Tol}_{\text{NMF}}=0.19$.} \\
			\includegraphics[width=0.44\textwidth]{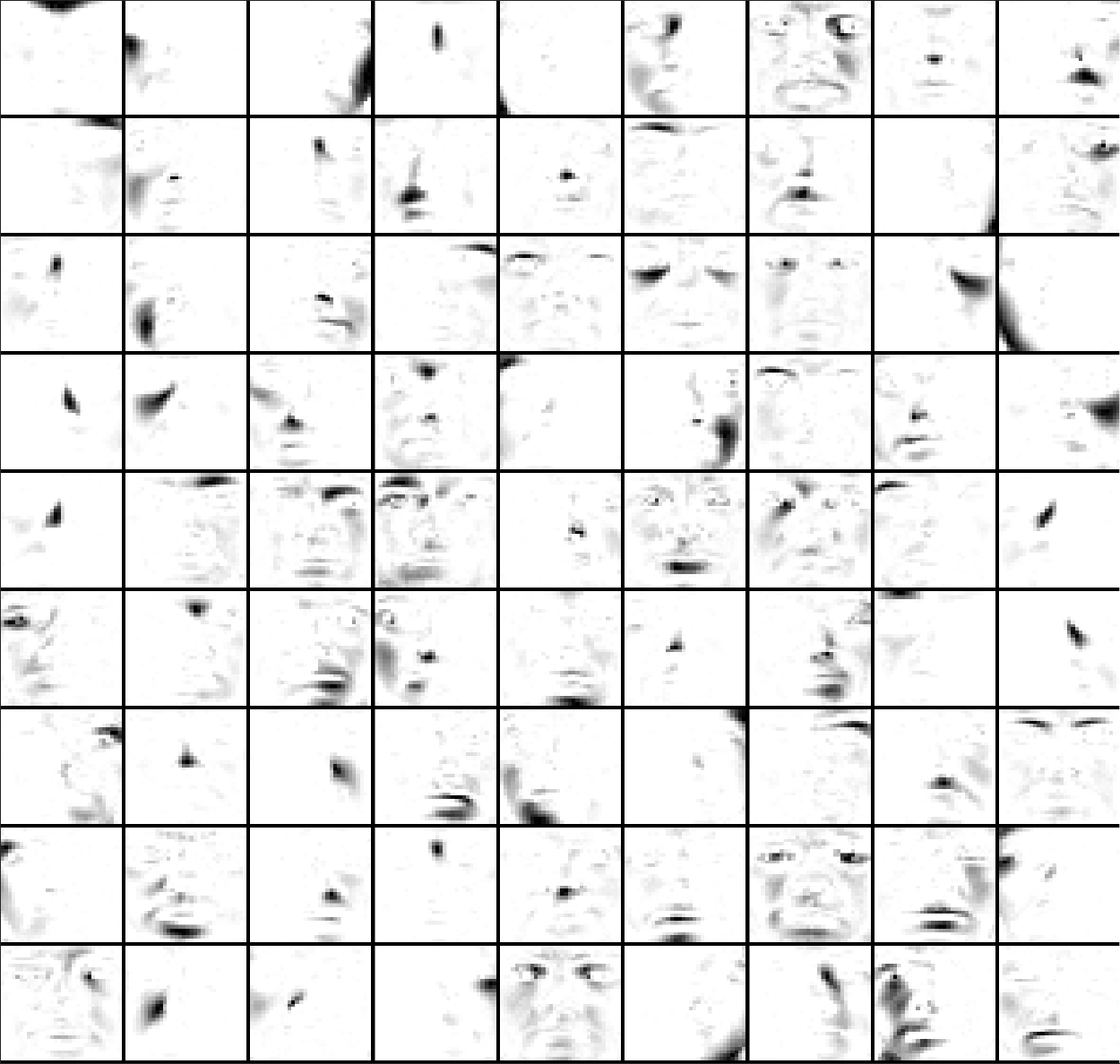}&
			\includegraphics[width=0.44\textwidth]{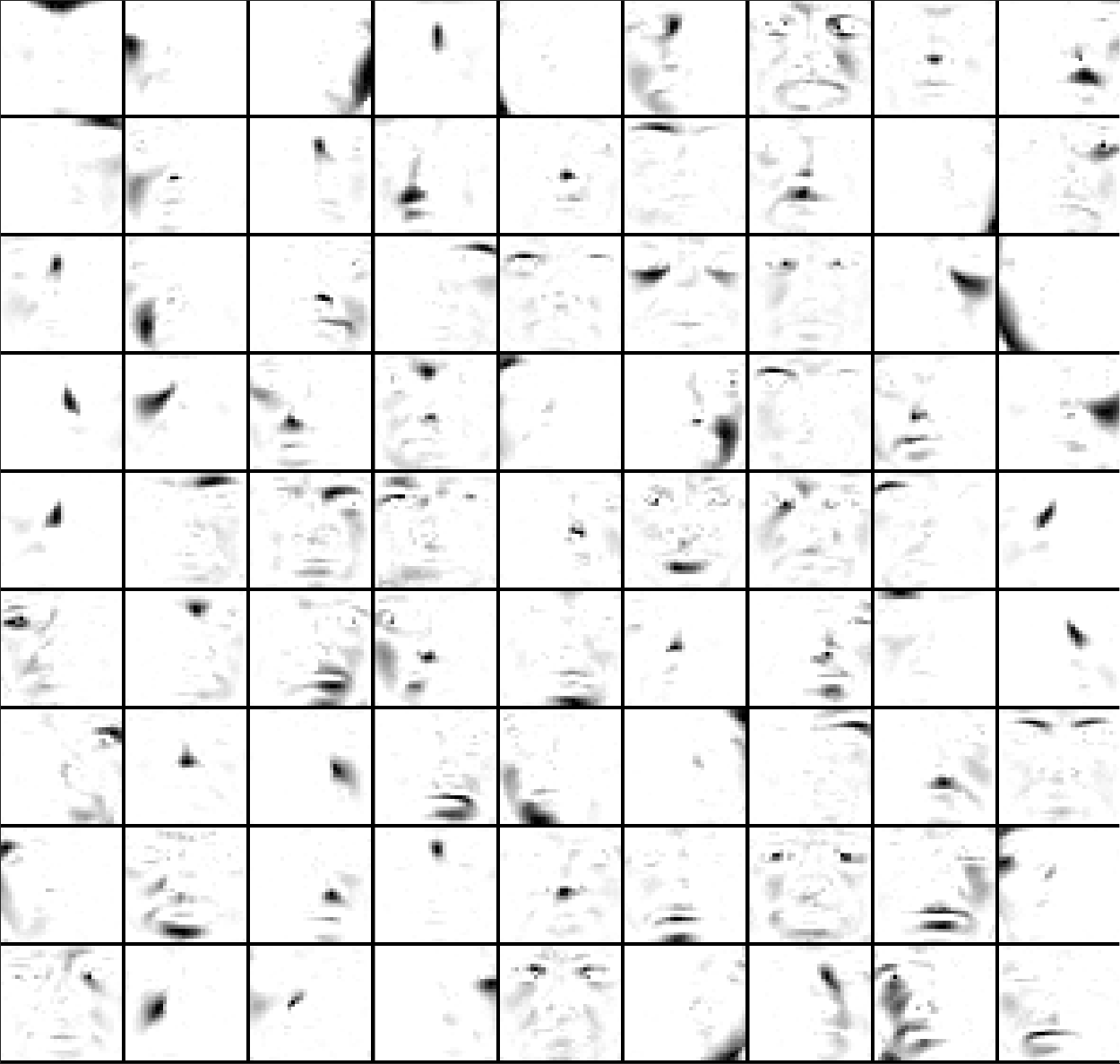}\\
			(c) NMD-APB, {$\text{Tol}_{\text{NMF}}=0.17$.}&(d) NMD-AAPB,   {$\text{Tol}_{\text{NMF}}=0.16$.}
		\end{tabular}
		\caption{Original factor $\tilde{U}$ of NMF for \emph{YaleB} dataset, with rank-$r=81$ and low-rank reconstruction by TSVD \cite{BoutsidisM14}, NMD-APB and NMD-AAPB with fixed rank-$r=55$.}
		\label{yaleb_res}
	\end{figure}
	
	\begin{figure}
		\setlength\tabcolsep{3pt}
		\centering
		\begin{tabular}{ccccccc}
			\includegraphics[width=0.22\textwidth]{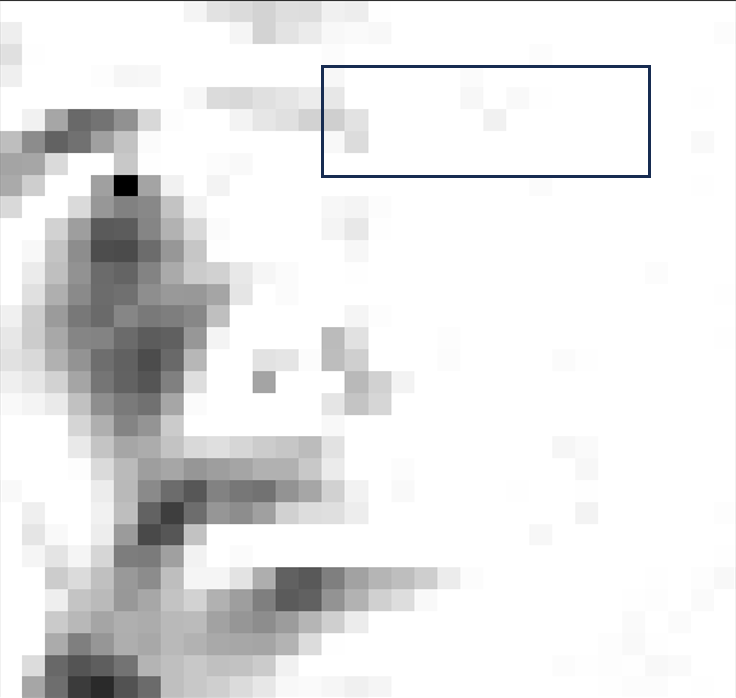}&
			\includegraphics[width=0.22\textwidth]{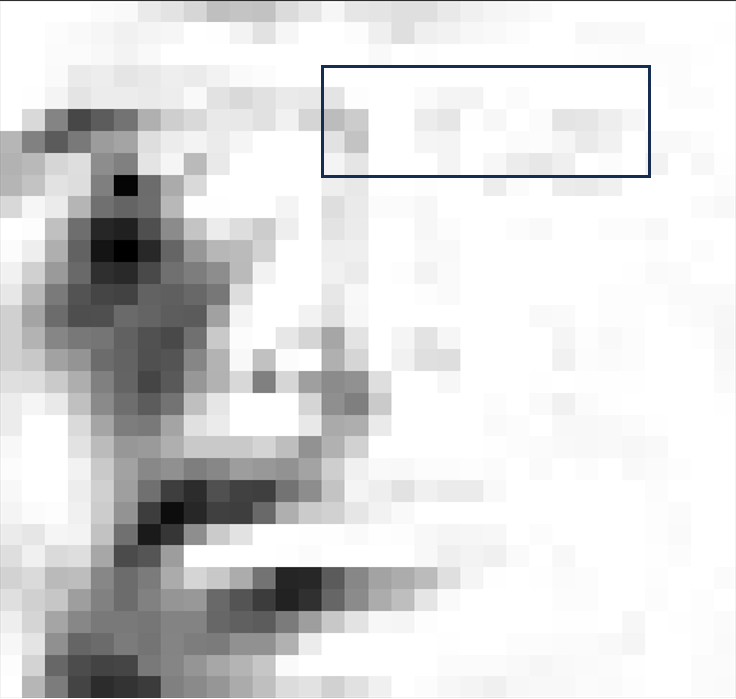}&
			\includegraphics[width=0.22\textwidth]{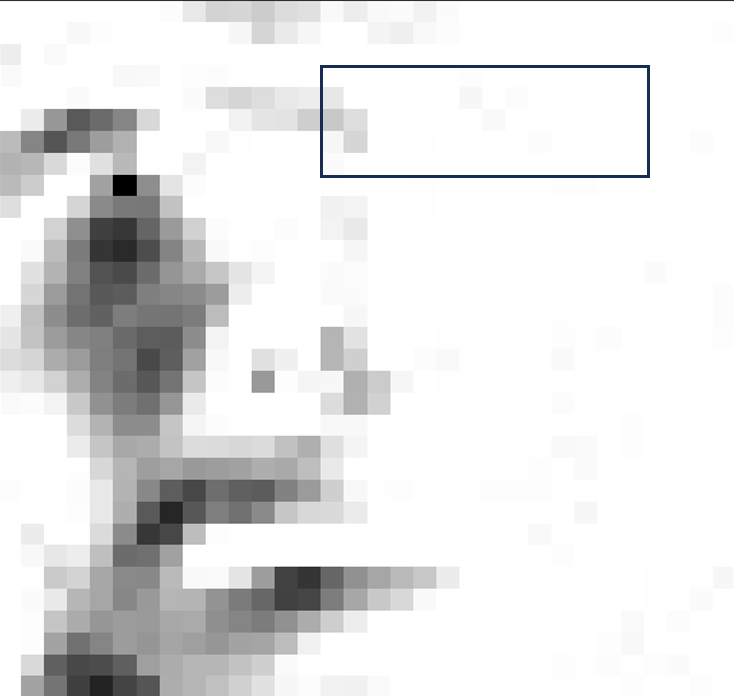}&
			\includegraphics[width=0.22\textwidth]{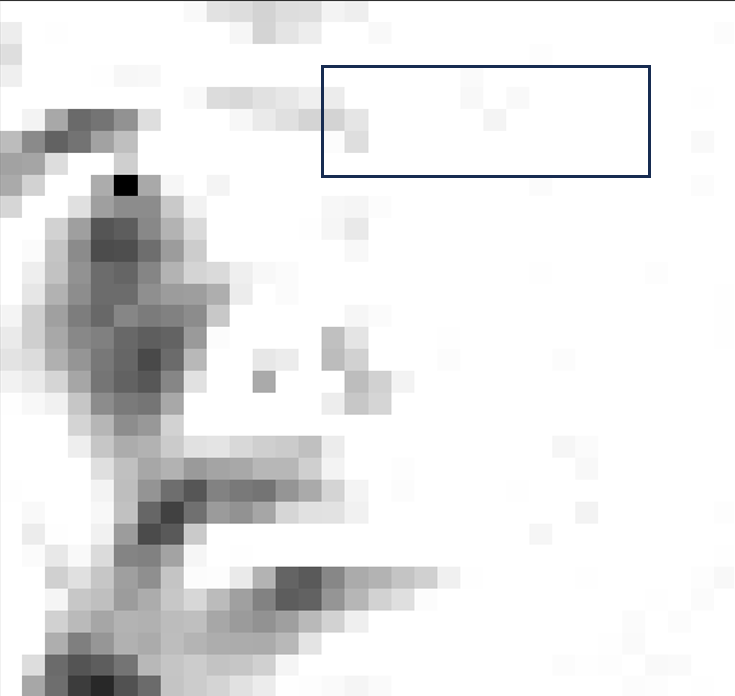}\\
			\includegraphics[width=0.22\textwidth]{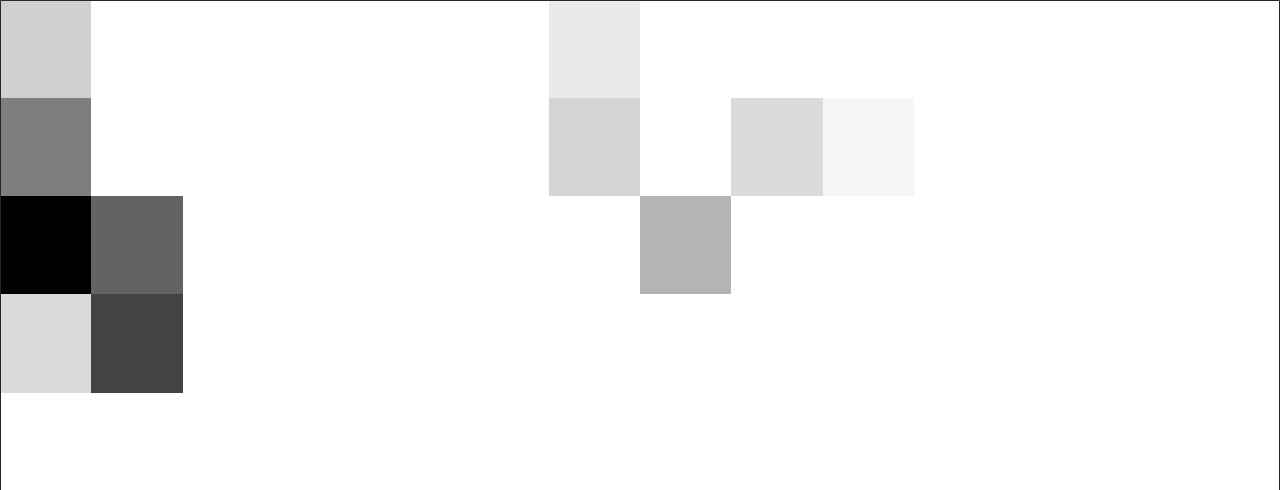}&
			\includegraphics[width=0.22\textwidth]{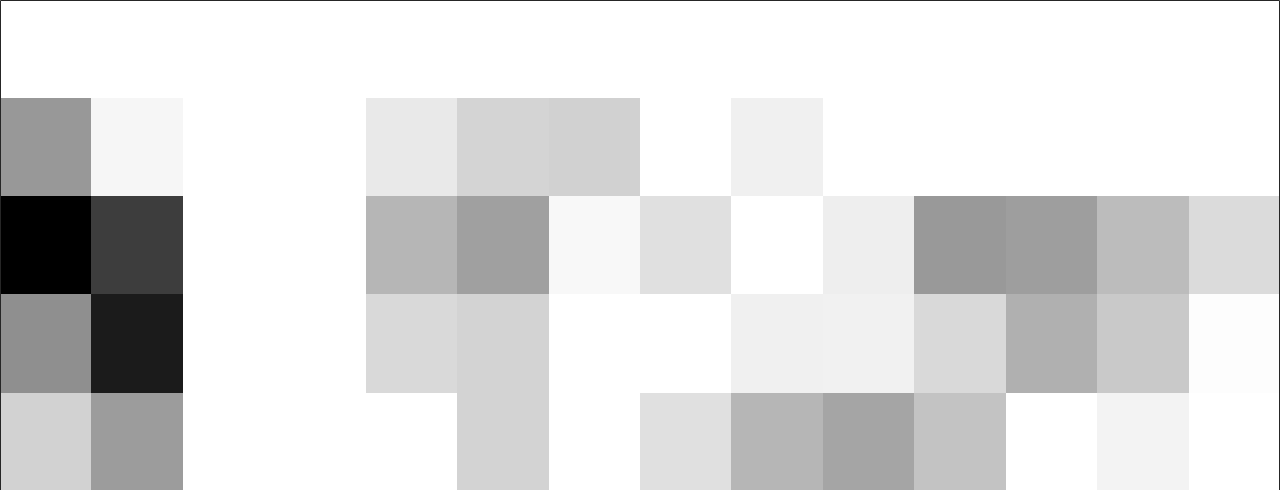}&
			\includegraphics[width=0.22\textwidth]{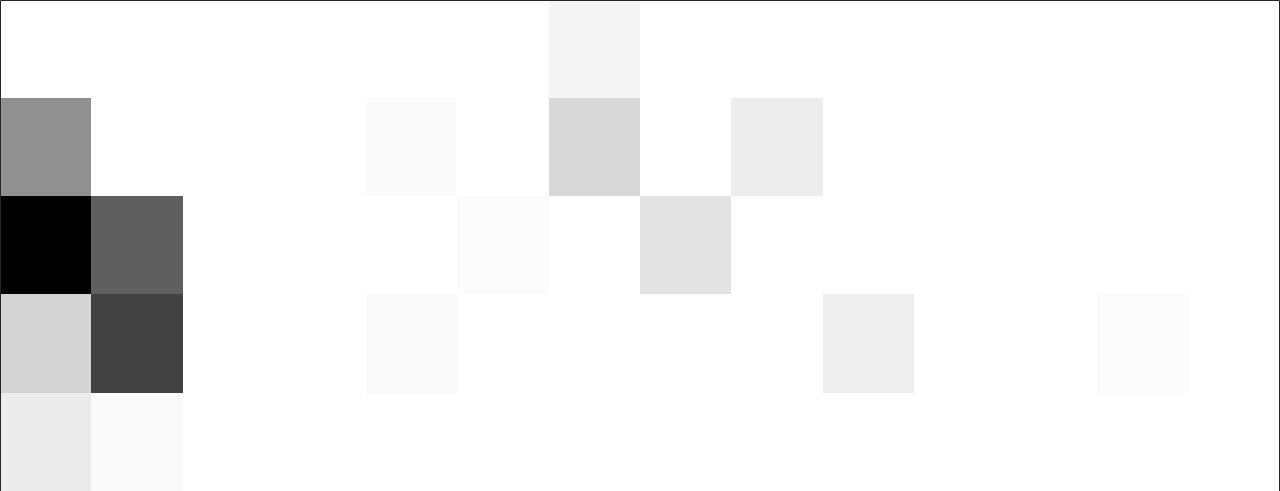}&
			\includegraphics[width=0.22\textwidth]{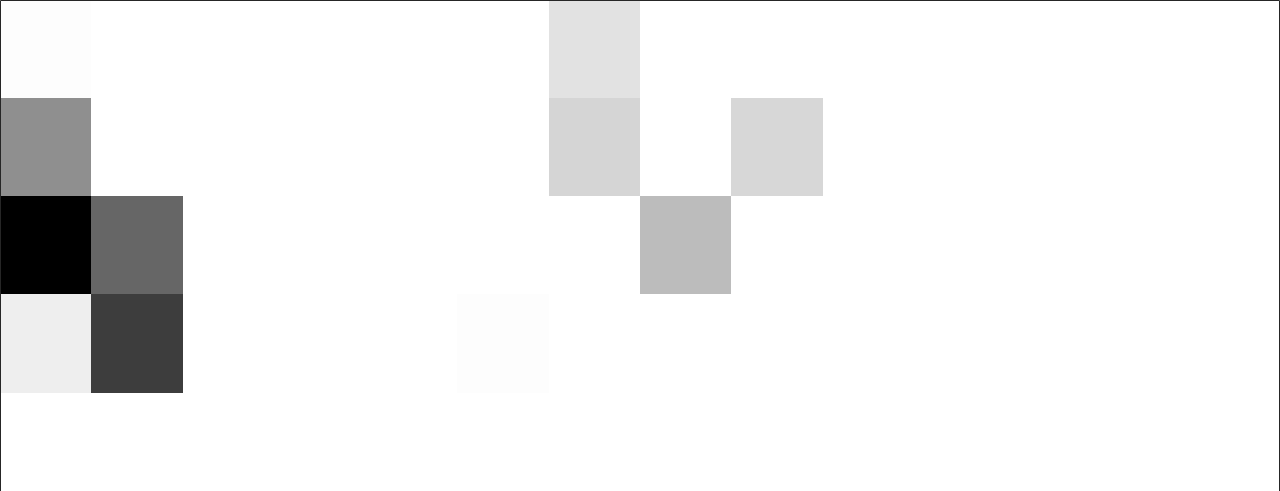}\\
		\end{tabular}
		\caption{The $80$-th basis of the factor $U$ in Figure~\ref{yaleb_res}. Left to right: Original, TSVD \cite{BoutsidisM14}, NMD-APB, NMD-AAPB.}
		\label{yaleb_res_80th}
	\end{figure}
	
	Specifically, we implement a rank-$100$ NMF decomposition\footnote{\url{https://gitlab.com/ngillis/nmfbook/}} on the \emph{ORL} dataset, producing a sparse non-negative matrix $\tilde{U} \in \mathbb{R}^{4096 \times 100}$. Similarly, for the \emph{YaleB} dataset, we perform a rank-$81$ NMF decomposition, resulting in  $\tilde{U} \in \mathbb{R}^{1024 \times 81}$.  Based on these decompositions, we further compress $\tilde U\in\mathbb R^{m\times r}$ into a thin basis matrix $U\in\mathbb R^{m\times r'}$, where $r'<r$ by the following optimization model:
	\begin{eqnarray}
		\begin{aligned}
			\underset{U,V,W}{\min}\, &\ \frac{1}{2}\|W-UV\|_{F}^{2}+\sum_{i=1}^{r}I_{\|{(V^{T})}_{i}\|_{0}\le s_{2}},\\
			\text{s.t.}\,\,\, &\ \max(0,W)=\tilde{U}.
		\end{aligned} \label{NMF_com}
	\end{eqnarray}
	Here, $U\in\mathbb R^{m\times r'}$, $V\in\mathbb R^{r'\times r}$, and $V_i$ is the $i$-th column of $V$. For simplicity, we set {$s_{2}=\lfloor r/1.2\rfloor$}  to demonstrate the numerical performance of model \eqref{NMF_com} for the proposed algorithms.  
	The performance of the compressed NMF is evaluated through the NMD relative error  and  the NMF relative error metric, defined as: 
	\begin{eqnarray}
		\text{Tol}_{\text{NMF}}:=\min_{\hat{V}\ge0}\frac{\|M-\max(0,UV)\hat{V}\|_{F}}{\|M\|_{F}}, 
	\end{eqnarray}
	The results are visually presented in Figure~\ref{real_l0_com}, where Figure~\ref{real_l0_com}~(a) displays the compression of a $4096$-by-$100$ NMF basis $U$ from the \emph{ORL} dataset, and Figure~\ref{real_l0_com}~(b) shows the compression of a $1024$-by-$81$ NMF basis $U$ from the \emph{YaleB} dataset.
	
	The numerical results presented in Figure~\ref{real_l0_com} clearly illustrate that the proposed NMD-APB and NMD-AAPB outperform the baseline methods. This superiority can be attributed to the fact that the ReLU-based model is more adept at capturing the underlying structure of the dataset, and furthermore, the availability of a closed-form solution for the $(U, V)$-subproblem facilitates faster convergence. 
	
	Additionally, Figure~\ref{yaleb_res} presents an instance of a rank $r = 55$ reconstruction derived from the original rank $r = 81$ NMF factor for the \emph{YaleB} dataset. We also show the details of the $80$-th basis of $U$ in  Figure~\ref{yaleb_res_80th}. This example further confirms the superior accuracy of the proposed model and algorithms in approximating the dataset.

\subsection{$H_{1}(U)=\eta_{1}\|U\|_{1}, H_{2}(V)=\eta_{2} \|V\|_{1}$}\label{11-regular}
In this subsection, we focus on the scenario where $H_{1}(U)=\eta_{1}\|U\|_{1}$, $H_{2}(V)=\eta_{2} \|V\|_{1}$ in \eqref{NMD-T-Obj}, which reformulated \eqref{NMD-T-Obj} equivalently as the following optimization problem
\begin{eqnarray}
	\begin{aligned}
		\underset{U,V,W}{\min}\, &\ \frac{1}{2}\|W-UV\|_{F}^{2}+\eta_{1}\|U\|_{1}+\eta_{2} \|V\|_{1},\\
		\text{s.t.}\,\,\, &\ \max(0,W)=M. 
	\end{aligned} \label{L1-L1-model}
\end{eqnarray}
For the above optimization problem, algorithms such as A-NMD, 3B-NMD, and NMD-TM are not suitable because both the $U$- and $V$-subproblems lack closed-form solutions. 
We compare our proposed methods with a PALM-type algorithm \cite{BolteST14} that  incorporates  a partial linearization technique \cite{WangHZ24}. This variant is denoted as PPALM, given that the $W$-subproblem consistently admits a closed-form solution. Additionally, we take into account its inertial counterpart, referred to as iPPALM, which is a special case of iPAMPL \cite{WangHZ24}. For a comprehensive understanding, please refer to Algorithm \ref{iPPALM} in Appendix \ref{ippalm_sec} for further details.

We first consider the synthetic datasets generated by 
\[
M=\max(0,UV),
\]
where $U\in\mathbb{R}^{m\times r^{*}}$ and $V\in\mathbb R^{r^*\times n}$ are generated by command ``$\mathrm{U=randn(m,r^{*})}$, $\mathrm{U(abs(U)<1)=0}$'' and ``$\mathrm{V=randn(r^{*},n)}$, $\mathrm{V(abs(V)<1)=0}$'' in MATLAB. 
We set $\eta_{1}=0.01$, $\eta_{2}=0.015$, $\max_{T}=20\,s$, and $\max_{K}=1000$. 

The numerical experiments presented in   Figure~\ref{syn_l1l1} demonstrate  that  NMD-APB and NMD-AAPB algorithms proposed in this paper converge faster than PPALM and iPPALM. 
One possible reason that  NMD-APB and NMD-AAPB   adopt the closed-form solution of the $(U,V)$-subproblem, while PPALM and iPPALM update the $U$- and $V$-subproblems  by the proximal gradient descent approach separately.  
The acceleration technique yields superior numerical results compared to the vanilla versions. Additionally, a higher estimated rank corresponds to a better fit to the original data.

\begin{figure}[!ht]
	\setlength\tabcolsep{2pt}
	\centering
	\begin{tabular}{c}
		\includegraphics[width=0.95\columnwidth]{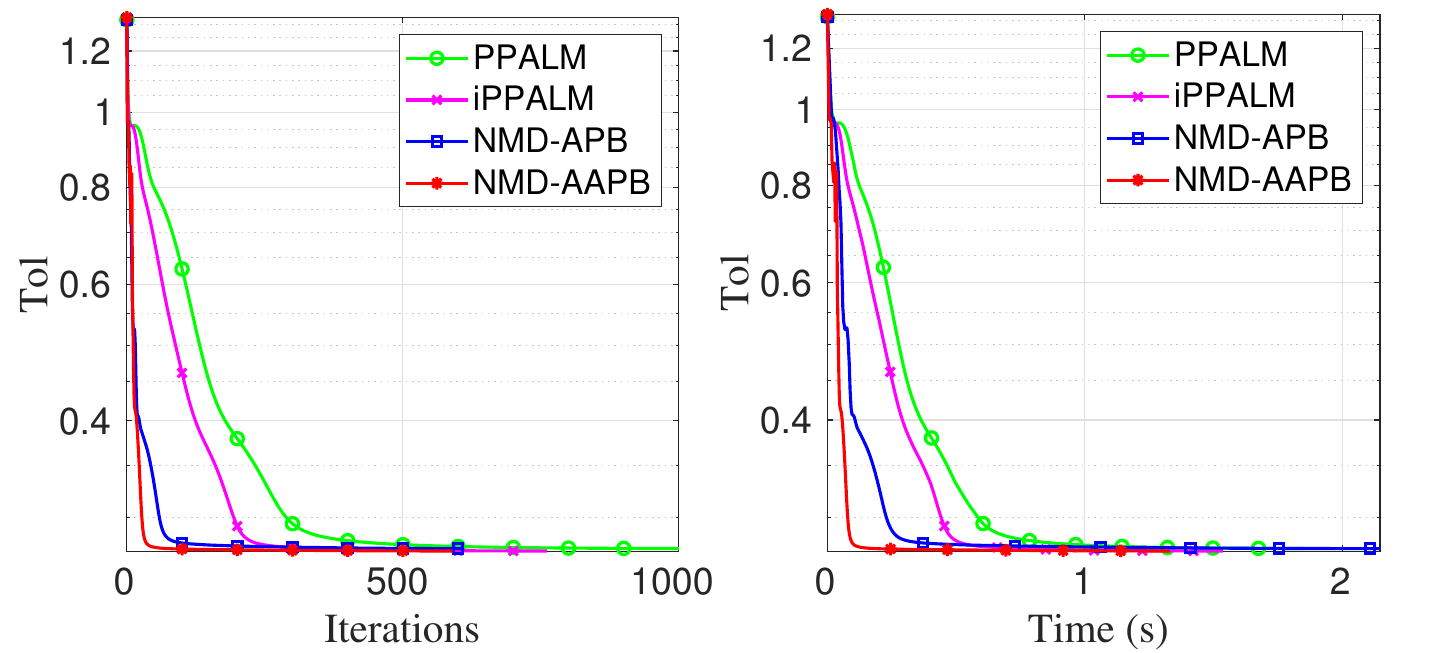}\\
		(a) Approximation rank $r=9$.\\
		\includegraphics[width=0.95\columnwidth]{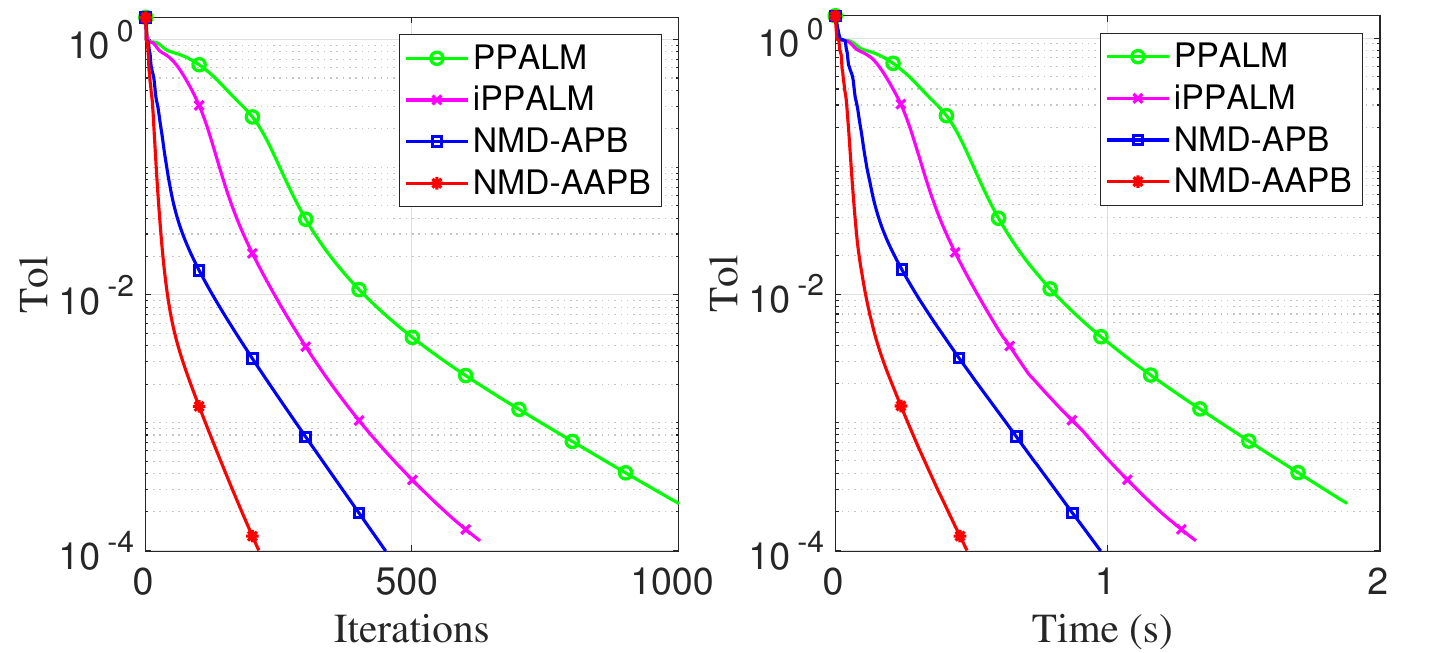}\\
		(b) Approximation rank $r=12$.\\
	\end{tabular}
	\caption{Numerical results for solving \eqref{L1-L1-model} with $m=1500,n=500, r^*=10$ by synthetic datasets.} 
	\label{syn_l1l1}
\end{figure}


Next, we evaluate our approach on three real-world datasets:  \emph{netz4504-dual},   \emph{mycielskian11}\footnote{Source: \url{https://sparse.tamu.edu/}}, and  the widely-used \emph{MNIST} dataset\footnote{Source: \url{http://yann.lecun.com/exdb/mnist/}}. The detailed characteristics and statistical properties of these datasets are presented in Table \ref{real_l1l1_details}.

\begin{table}[h!]
	\begin{center}
		\caption{The details for three real-world datasets applied in solving \eqref{L1-L1-model}.}   \label{real_l1l1_details}
		\begin{tabular}{c|c c c } 
			\hline 
			Data & $m$ & $n$ & sparsity ($\%$) \\\hline
			\emph{netz4504-dual} & 615 &615 & 0.62\\
			\emph{mycielskian11} & 1535 & 1535 & 5.72\\
			\emph{MNIST} & $1000\times10$ & 784 & 19.27\\\hline
		\end{tabular}
	\end{center}
\end{table}

We set $\max_{T}=30\,s$ and $\max_{K}=1000$ and show the numerical results   in Table \ref{real_l1l1_table}. It is evident from this table that, compared with   PPALM and iPPALM, the algorithm proposed in this paper achieves better numerical results.  
Furthermore,  NMD-AAPB converges faster than NMD-APB. 
The numerical performance is also confirmed in Figure~\ref{real_l1l1_r20}.

\begin{table}[h!]
	\begin{center}
		\setlength{\tabcolsep}{1mm}
		\caption{The relative error for three real-world datasets for solving optimization problem \eqref{L1-L1-model}.
		}
	\label{real_l1l1_table}
	\begin{tabular}{c|c|c c| c c} 
		\hline 
		Data & rank ($r$) & PPALM & iPPALM & NMD-APB & NMD-AAPB\\\hline
		\multirow{3}{*}{\emph{netz4504-dual}} & 10 & 9.0e-1 &6.2e-1 & 1.8e-1 & \textbf{1.5e-1}\\
		& 15 & 7.2e-1 &1.7e-1 & 4.9e-2& \textbf{6.2e-3} \\
		& 20 & 4.4e-1&2.4e-2 & 2.1e-2& \textbf{5.8e-4}\\ \hline
		\multirow{3}{*}{\emph{mycielskian11}} & 15 & 6.0e-1 &2.4e-1 & 1.3e-1 & \textbf{9.3e-2}\\
		& 25 & 3.0e-1 &1.3e-1 & 8.3e-2 & \textbf{6.7e-2} \\
		& 35 & 2.1e-1&9.1e-2 & 6.2e-1& \textbf{4.1e-2}\\ \hline
		\multirow{2}{*}{\emph{MNIST}} &25 & 4.1e-1 &3.5e-1 & 3.3e-1& \textbf{3.0e-1}\\
		& 35 & 3.8e-1 &3.0e-1 & 2.7e-1& \textbf{1.7e-1} \\
		$m=1000\times 10$& 45 & 3.3e-1&2.8e-1 & 2.4e-1& \textbf{1.1e-1} \\ \hline
	\end{tabular}
\end{center}
\end{table}

\begin{figure}[!ht]
\setlength\tabcolsep{2pt}
\centering
\begin{tabular}{c}
	\includegraphics[width=0.95\columnwidth]{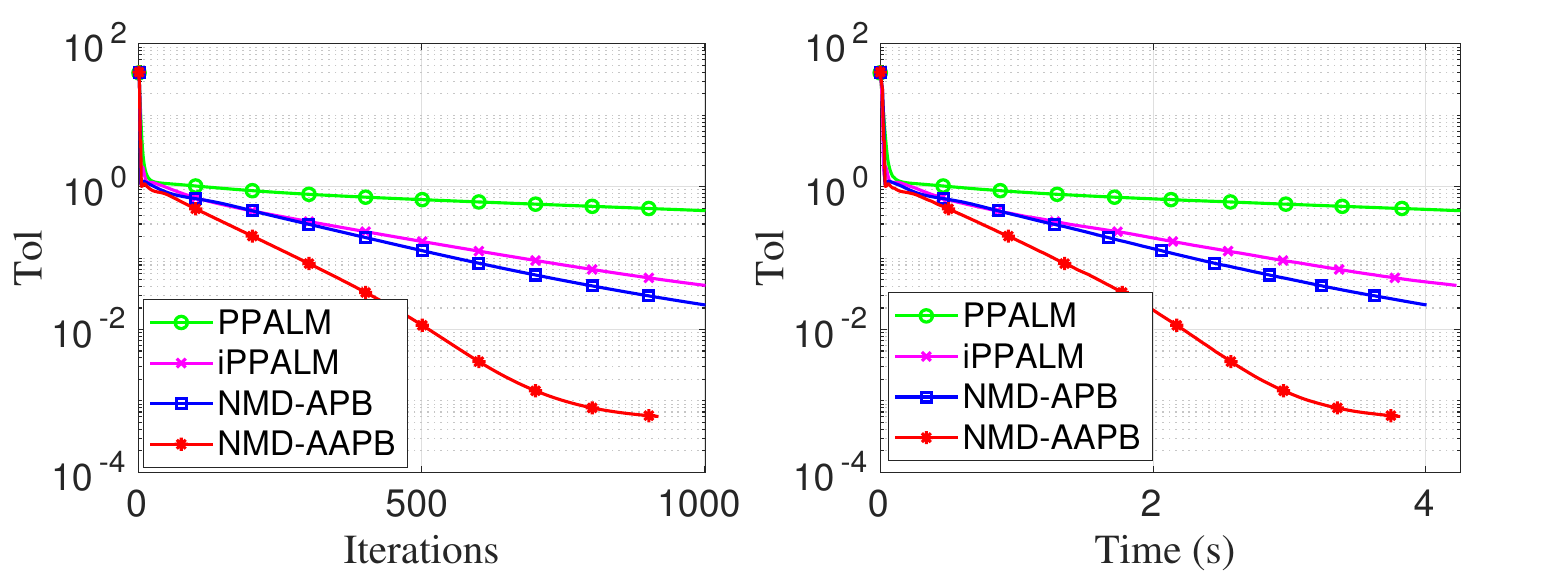}\\
	(a) \emph{netz4504-dual} with $r=20$. Left: Relative error. Right: Time (seconds).\\
	\includegraphics[width=0.95\columnwidth]{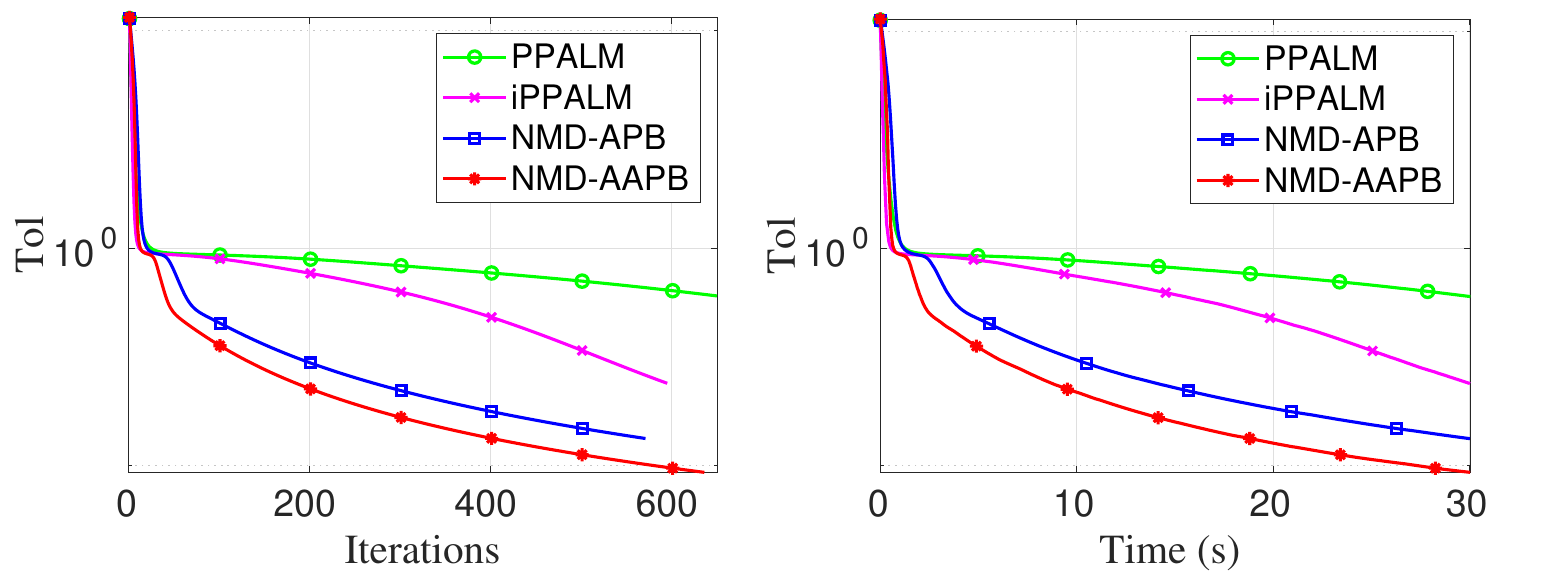}\\
	(b) \emph{mycielskian11} with $r=15$. 
	Left: Relative error. Right: Time (seconds).\\
\end{tabular}
\caption{Numerical results for solving \eqref{L1-L1-model} by two real-world datasets.  } 
\label{real_l1l1_r20}
\end{figure}

\bigskip

\section{Conclusion} \label{conclusion}
In this paper, we investigated feature extraction and pattern recognition in sparse nonnegative matrices. While the original data may not inherently exhibit low-rank properties, we explored  the intrinsic low-rank characteristics through the application of the ReLU activation function. 
We introduced a novel nonlinear matrix decomposition framework that incorporates comprehensive regularization terms, capable of addressing critical tasks such as graph-regularized clustering and sparse NMF basis compression. Specifically, we developed an accelerated alternating partial Bregman (AAPB) algorithm tailored for solving this model.  
From a theoretical perspective, we established both the sublinear and global convergence properties of the proposed algorithm. Extensive numerical experiments on graph-regularized clustering and sparse NMF basis compression tasks demonstrated the effectiveness and efficiency of our proposed model and algorithm.

The proposed model shows distinctive advantages in the following conditions: high sparsity in the dataset and the absence of closed-form solutions for the subproblems related to factor matrices. In cases where closed-form solutions are attainable, i.e., the regularization terms in \eqref{NMD-T-Obj} are zeros or Tikhonov regularizations, the alternative algorithms such as 3B-NMD and NMD-TM may exhibit superior performance. 


\appendix
\section{iPPALM framework}\label{ippalm_sec}
We present the framework of the inertial partial proximal alternating linearization minimization (iPPALM) algorithm for the optimization problem \eqref{NMD-T-Obj} (Algorithm \ref{iPPALM}). Note that when $\beta_{k}=0$ in Algorithm \ref{iPPALM}, it reduces to the partial proximal alternating linearization minimization (PPALM) method. Using the convergence analysis framework from \cite{PockS16, WangHZ24}, we can also establish the global convergence result for the iPPALM algorithm.

\begin{algorithm}[!ht]
\caption{\textbf{iPPALM:} Inertial partial proximal alternating linearization minimization for the optimization problem \eqref{NMD-T-Obj}}
\label{iPPALM}
{\bfseries Input:} $M$, $r$, $I_{+}$, $I_{0}$, and $K$. Bregman distance $\psi$. \\
{\bfseries Initialization:} $U^{0}$,  $V^{0}$, $X^{0}=U^{0}V^{0}$, and set $W_{i,j}=M_{i,j}$ for $(i,j)\in I_{+}$.
\begin{algorithmic}[1] 
	\For {$k=0,1,\dots K$} 
	\State Compute $W_{i,j}^{k+1}=\min(0,X_{i,j}^{k})$ for $(i,j)
	\in I_{0}$.
	\State Compute $\bar{U}^{k}= U^{k}+\beta_{k}(U^{k}-U^{k-1})$ with $\beta_{k}\in[0,1)$, let $\lambda_{1}^{k}>0$ and update
	\begin{align*}
		U^{k+1}\in\underset{U}{\text{argmin}}\, &\langle \nabla_{U}F(\bar{U}^{k},V^{k},W^{k+1}),U-\bar{U}^{k}\rangle\\
		&+\frac{1}{2\lambda_{1}^{k}}\|U-\bar{U}^{k}\|_{F}^{2}+H_{1}(U).
	\end{align*}
	\State Compute $\bar{V}^{k}= V^{k}+\beta_{k}(V^{k}-V^{k-1})$, let $\lambda_{2}^{k}>0$ and update
	\begin{align*}
		V^{k+1}\in\underset{V}{\text{argmin}}\, &\langle \nabla_{V}F(U^{k+1},\bar{V}^{k},W^{k+1}),V-\bar{V}^{k}\rangle\\
		&+\frac{1}{2\lambda_{2}^{k}}\|V-\bar{V}^{k}\|_{F}^{2}+H_{2}(V).
	\end{align*}
	\State Compute $X^{k+1}=U^{k+1}V^{k+1}$.
	\EndFor
\end{algorithmic} 
{\bfseries Output:}  $X^{k+1}$.
\end{algorithm}

\section*{Declarations}
{\bf Funding:} {This research is supported by the National Key R\&D Program of China (No. 2021YFA1003600), the National Natural Science Foundation of China (NSFC) grants 12131004, 12401415, 12471282, the R\&D project of Pazhou Lab (Huangpu) (Grant no. 2023K0603, 2023K0604), the Natural Science Foundation of Hunan Province (No. 2025JJ60009), and the Fundamental Research Funds for the Central Universities (Grant No. YWF-22-T-204)}.

\noindent{\bf Competing interests:} The authors have no competing interests to declare that are relevant to the content of this article.

\bibliographystyle{abbrv}
\bibliography{Ref_ReLU_NMD}
\end{document}